\tikzset{
    -Latex,auto,node distance =1 cm and 1 cm,semithick,
    state/.style ={circle, draw, minimum width = 0.7 cm},
    every node/.style={circle, inner sep=0.3mm, minimum size=0.7cm, draw, thick, black, fill=white, text=black},
    every path/.style={thick},
    % point/.style = {circle, draw, inner sep=0.04cm,fill,node contents={}},
    % bidirected/.style={Latex-Latex,dashed},
    % el/.style = {inner sep=2pt, align=left, sloped}
}
\newcommand{\cmark}{\ding{51}}%
\newcommand{\xmark}{\ding{55}}%
\tikzset{
    -Latex,auto,node distance =1 cm and 1 cm,semithick,
    state/.style ={circle, draw, minimum width = 0.7 cm},
    % point/.style = {circle, draw, inner sep=0.04cm,fill,node contents={}},
    % bidirected/.style={Latex-Latex,dashed},
    % el/.style = {inner sep=2pt, align=left, sloped}
}
\newtheorem{defn}[theorem]{Definition}
\newtheorem{thm}{Theorem}
\newtheorem{prop}[theorem]{Proposition}
\newtheorem{cor}[theorem]{Corollary}
\newtheorem{assumption}{Assumption}
\newtheorem*{question*}{Question}
\newtheorem*{answer*}{Answer}
\newtheorem*{solution*}{Solution}
\newtheorem*{nextstep*}{Next Step}
\newtheorem*{issue*}{Issue}
\newcommand{\rref}[2][]{\prettyref{#2}}
\newcommand{\kron}{\mathbbm{1}}
\newcommand{\inv}{{-1}}
\newcommand{\DO}{\texttt{do}}
\DeclareMathOperator*{\argmin}{arg\,min}
\DeclareMathOperator{\pa}{pa}
\DeclareMathOperator{\tr}{tr}
\newcommand{\bbE}{\mathbb{E}}
\newcommand{\bbI}{\mathbb{I}}
\newcommand{\bbR}{\mathbb{R}}
\newcommand{\bbP}{\mathbb{P}}
\newcommand{\htheta}{\widehat{\theta}}
\newcommand{\bA}{{\mathbf{A}}}
\newcommand{\bC}{{\mathbf{C}}}
\newcommand{\bD}{{\mathbf{D}}}
\newcommand{\bL}{{\mathbf{L}}}
\newcommand{\bM}{{\mathbf{M}}}
\newcommand{\bP}{{\mathbf{P}}}
\newcommand{\bS}{{\mathbf{S}}}
\newcommand{\bY}{{\mathbf{Y}}}
\newcommand{\bZ}{{\mathbf{Z}}}
\newcommand{\ba}{{\mathbf{a}}}
\newcommand{\bc}{{\mathbf{c}}}
\newcommand{\bq}{{\mathbf{q}}}
\newcommand{\bu}{{\mathbf{u}}}
\newcommand{\bv}{{\mathbf{v}}}
\newcommand{\bw}{{\mathbf{w}}}
\newcommand{\by}{{\mathbf{y}}}
\newcommand{\bz}{{\mathbf{z}}}
\newcommand{\cA}{\mathcal{A}}
\newcommand{\cC}{\mathcal{C}}
\newcommand{\cE}{\mathcal{E}}
\newcommand{\cG}{\mathcal{G}}
\newcommand{\cK}{\mathcal{K}}
\newcommand{\cM}{\mathcal{M}}
\newcommand{\cN}{\mathcal{N}}
\newcommand{\cR}{\mathcal{R}}
\newcommand{\cS}{\mathcal{S}}
\newcommand{\cX}{\mathcal{X}}
\newcommand{\rank}{{\textrm{rank}}}
\newcommand{\onevec}[1]{\textnormal{aug}(#1)}
\title[Causal imputation for counterfactual SCMs]
{Causal Imputation for Counterfactual SCMs: 
\\
Bridging Graphs and Latent Factor Models}
\thanks{{\'A}. Ribot's contributions to this work were made while visiting MIT IDSS and while affiliated with Centre de Formaci{\'o} Interdisciplin{\`a}ria Superior (CFIS) - Universitat Polit{\`e}cnica de Catalunya (UPC).} \Email{aribotbarrado@g.harvard.edu}\\
\begin{document}

\maketitle

\begin{abstract}
    We consider the task of \textit{causal imputation}, where we aim to predict the outcomes of  some set of actions across a wide range of possible contexts.
    % A key goal of scientific modeling is to make accurate \textit{causal predictions} about a system, i.e. predictions about the outcome of an action, given some context describing the system.
    %
    As a running example, we consider predicting how different drugs affect cells from different cell types.
    We study the \textit{index-only} setting, where the actions and contexts are categorical variables with a finite number of possible values.
    Even in this simple setting, a practical challenge arises, since often only a small subset of possible action-context pairs have been studied.
    Thus, models must extrapolate to novel action-context pairs, %which requires the model class to use some form of inductive bias.
    %
%    Such extrapolation 
    which can be framed as a form of matrix completion with rows indexed by actions, columns indexed by contexts, and matrix entries corresponding to outcomes.
    We introduce a novel SCM-based model class, where the outcome is expressed as a counterfactual, actions are expressed as interventions on an instrumental variable, and contexts are defined based on the initial state of the system.
    We show that, under a linearity assumption, this setup induces a \textit{latent factor model} over the matrix of outcomes, with an additional fixed effect term.
    To perform causal prediction based on this model class, we introduce simple extension to the Synthetic Interventions estimator \citep{agarwalSI}.
    %
    % The Synthetic Interventions estimator was recently developed to extend the traditional Synthetic Control literature to multiple interventions. 
    %
    % Using our framework, we show that a broad class of graphical models meet the assumptions that this estimator requires.
    %
    We evaluate several matrix completion approaches on the PRISM drug repurposing dataset, showing that our method outperforms all other considered matrix completion approaches.
\end{abstract}

\vspace{0.2cm}

\begin{keywords}%
Causal imputation, latent factor models, synthetic interventions, matrix completion
\end{keywords}

\addtocontents{toc}{\protect\setcounter{tocdepth}{0}}

\section{Introduction}\label{sec:intro}
%start more general: general to specific

% Scientific modeling aims to construct theories that accurately predict a system's behavior across a wide range of conditions.
A core goal in scientific modeling -- often left implicit  -- is to construct models that accurately predict a system's behavior across a wide range of conditions.
More precisely, we consider the following general \textit{causal prediction} problem: % of the following quite general form:
\begingroup
\begin{center}
    \textit{
    We know some \emph{context} $C$, i.e., some partial information about the system's state.
    \\
    We are considering whether to perform an \emph{action} $A$ that will affect the system.
    \\
    We wish to predict some \emph{outcome} $Y$, i.e. some feature(s) of the system after performing $A$.
    }
\end{center}
\endgroup

In general, causal prediction may involve high-dimensional contexts, actions, and/or outcomes, such as images or text \citep{chalupka2015visual,castro2020causality,feder2022causal}.
In this work, we study causal prediction in the \textit{index-only} setting\footnote{This setting is similar to the \textit{tabular} setting in reinforcement learning; we choose to use the term \textit{index-only} to emphasize the lack of any additional structure.}, where actions take values in $[m] := \{ 1, \ldots, m \}$ and contexts take values in $[n] := \{ 1, \ldots, n \}$, with the values $i \in [m]$ and $j \in [n]$ carrying no semantic meaning.
In particular, the only information available about an action/context is an index which distinguishes it from other actions/contexts; there is no prior notion of similarity between actions/contexts.
To emphasize our setting, we will write $I_A$ instead of $A$ and $I_C$ instead of $C$.
By restricting our focus to the index-only setting, we aim to maximize the clarity of this work, while building a solid foundation for future works.

As a running example of causal prediction, we consider the problem of \textit{viability prediction} in biology, an essential step in tasks such as drug repurposing \citep{wu2022deep}.
In this problem, we are given a cell type $j \in [n]$ (e.g., skin or lung) and a drug label $i \in [m]$ (e.g. tyloxapol or gepefrine), and we aim to predict what proportion $Y$ of cells will survive if we administer drug $i$ to a large group of type-$j$ cells.
We note that existing databases \citep{drug_database} offer extensive additional information about drugs, which can be used when extending beyond the index-only setting considered here.
Thus, the prediction accuracy in our setting should be seen as a lower bound on the accuracy that could be attained by leveraging this additional information.

\begin{figure}[t]
    \[
    \bY = 
    \underbrace{
    \begin{pmatrix}
        Y_{11} & Y_{12} & ? & \cdots & Y_{1n} \\
        ? & ? & Y_{23} & \cdots & Y_{2n} \\
        Y_{31} & ? & Y_{33} & \cdots & ? \\
        \vdots & \vdots & \vdots & \ddots & \vdots \\
        Y_{m1} & ? & ? & \cdots & ? 
    \end{pmatrix}}_{\text{ \normalsize contexts $j \in [n]$}}
    \begin{rcases*}
    \\
    \\
    \\
    \\
    \\
    \end{rcases*}
    \text{ \normalsize actions $i \in [m]$}
    \]
    %\vspace{-2em}
    \caption{
    \textbf{The causal matrix completion problem}. 
    Each row of $\bY$ corresponds to an action, and each column corresponds to a context. 
    $Y_{ij}$ denotes the outcome after performing action $i$ in context $j$. We represent missing entries with "$?$".
    }
    \label{fig:matrixY}
\end{figure}

To generate causal predictions, we need some model specifying how an action $I_A$ interacts with the context $I_C$ to produce the outcome $Y$.
In statistics and machine learning, a model is obtained via \textit{data-driven} approaches, which consider a fixed \textit{model class} $\Theta$ and use data to select a model $\htheta$ from $\Theta$ (called \textit{learning} or \textit{model fitting}).
In this work, we consider \textit{supervised} learning, where the available data consists of samples of $(I_A, I_C, Y)$.
% \footnote{This setting can be extended in several ways, e.g., if the available data also contains some \textit{privileged information} \citep{vapnik2009new}.}
%
For large $m$ and $n$, the available data often contains samples from only a small subset $\Omega$ of all possible $m \cdot n$ possible action-context pairs.
For example, given $n \approx 100$ cell types and $m \approx 10,000$ drugs, we have $m \cdot n \approx 1$ million, but our dataset might cover only 5-10\% of these pairs.
By arranging the available data into a matrix with rows indexed by actions and columns indexed by contexts, we obtain a partially-observed matrix $\bY$, as in \rref{fig:matrixY}.
In these situations, causal prediction requires \textit{extrapolating} from $\Omega$ to the entire space $[m] \times [n]$, a problem known as \textit{causal imputation} \citep{squires2022causal}.

For extrapolation to be feasible, the model class $\Theta$ must encode some inductive biases governing the interplay between actions, contexts, and outcomes.
%
% However, extrapolation is not enough for accurate prediction: the trivial model class containing only the zero function (i.e., the function that ignores $I_A$ and $I_C$, always returning $Y = 0$) ``extrapolates" from no data at all, but does not generate accurate causal predictions.
%
The model class must also be reasonably well-specified, i.e., there must exist some $\theta^* \in \Theta$ which accurately describes the relationship between $Y$, $I_A$, and $I_C$, at least to a good approximation.
Given the importance of the model class $\Theta$, this work aims to advance our understanding of the relationship between two model classes that are commonly used for causal prediction.
In particular, we study the class of \textit{latent factors models} (LFMs) and the class of \textit{structural causal models} (SCMs).

LFMs, also known as \textit{interactive fixed effects models}, are widely used in econometrics and recommendation systems \citep{Athey_2021,koren2009}.
In an LFM, each action $i \in [m]$ is associated with an unknown vector $\bu_i \in \bbR^r$, and each context $j \in [n]$ is associated with an unknown vector $\bv_j \in \bbR^r$.
LFMs assume $Y_{ij} = \langle \bu_i, \bv_j \rangle + \varepsilon_{ij}$ for some mean-zero $\varepsilon_{ij}$, and can be seen as a simple form of SCM, see \rref{fig:causalfactor}.
We can also extend these models to include terms for a \textit{fixed action effect} and/or a \textit{fixed context effect} by letting $Y_{ij} = \langle \bv_i, \bu_j \rangle + \alpha_i + \beta_j + \varepsilon_{ij}$ for some $\alpha_i, \beta_j \in \bbR$.
LFMs are related to low-rank factorizations: letting $\bL \in \bbR^{m \times n}$ with $\bL_{ij} = \bbE[Y_{ij}]$, the definition of the LFM (with no fixed effects) implies that $\rank(\bL) \leq r$.
%
%
% index-only LFMs serve as a natural starting point to generalize to settings where more information is available for the actions and contexts (e.g., $\cA = \bbR^m$ and $\cC = \bbR^n$), or to vector-valued outcomes (i.e., $\cY = \bbR^p$).

% In this paper, we consider the index-only setting, while allowing for vector-valued outcomes.
%
% Coming back to our example: for each cell, we assume that we only know its initial cell type and a label indicating which drug it encountered, i.e., we do not assume access to any structural information about the drugs.
%
% Thus, our experimental results in \rref{sec:experiments} can be seen as a \textit{lower bound} to the accuracy that could be attained given additional information.
%
% By focusing on the index-only setting, we hope to maximize the clarity of this work, while providing a solid foundation for future work.
%
LFMs are often assumed as a model class without any ``deeper'' justification.
One intuitive way to motivate LFMs is to show that they arise from other (potentially more transparent) modeling assumptions, as in \cite{udell2017big}.
Relating LFMs to other model classes is important for several reasons: such connections offer insights that could legitimize trust in the model's prediction, and can serve as a starting point from which to develop more general model classes.
Thus, a primary aim of this work is to offer a new justification for LFMs, starting from the assumption that the system's state obeys a structural causal model (SCM), with $Y_{ij}$ defined as a \textit{counterfactual} outcome under action $i$ when the system is in context $j$. 
That is, in this paper, we work with quantities that are defined in terms of a counterfactual distribution, and which cannot be defined only using interventional distributions.
Such quantities are commonly seen, for example, in the literature on mediation analysis \citep{malinsky2019potential}.

Finally, note that there is a substantial difference between the roles that columns (contexts) and rows (actions) are playing. Inspired by this distinction, let us call a matrix completion approach \emph{symmetric} when, performed on $\bY^{\top}$, we obtain the same predictions. We want to use a non-symmetric approach that is compatible with our causal viewpoint.

% A similar SCM-based justification for LFMs was already established in \cite{squires2022causal}.
% %
% However, they considered only SCMs where the context $C$ was \textit{upstream} of all variables describing the system's state.
% %
% In the viability prediction example, this assumption does not reflect biological reality, since the cell type $C$ is defined in terms of cell state, i.e., $C$ is \textit{downstream} from cell state.

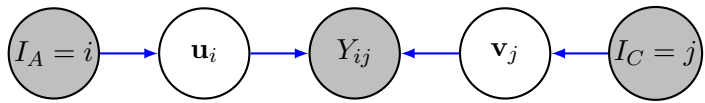
\begin{figure}[t]
    \centering
    \begin{tikzpicture}[align=center]
        \node[fill=lightgray,minimum size = 1.2cm] (0) at (0,0) {$I_A = i$};
        \node[minimum size = 1.2cm] (1) at (2,0) {$\bu_i$};
        \node[fill=lightgray, minimum size = 1.2cm] (2) at (4,0) {$Y_{ij}$};
        \node[minimum size = 1.2cm] (3) at (6,0) {$\bv_j$};
        \node[fill=lightgray, minimum size = 1.2cm] (4) at (8,0) {$I_C = j$};
        \draw[blue] [-latex] (0) edge (1);
        \draw[blue] [-latex] (1) edge (2);
        \draw[blue] [-latex] (3) edge (2);
        \draw[blue] [-latex] (4) edge (3);
\end{tikzpicture}
    \caption{\textbf{The latent factor model (LFM) written as a simple structural causal model (SCM).} 
    LFMs assume that $Y_{ij} = \langle \bu_i, \bv_i \rangle + \varepsilon_{ij}$ for $\bu_i,\bv_i \in \bbR^r$.
    This generative process can be viewed as an SCM over 3 observed (shaded) and 2 latent (unshaded) variables.
    Here, $I_A$ and $I_C$ represent action and context indices, respectively (note that they are independent).
    %
    % The structural equations are $U = \bu_{I_A}, V = \bv_{I_C}$, and $Y = \langle U, V \rangle + \varepsilon$.
    %
    % The $i$-th row of $\bY$ is defined by $I_A = i$, and in that case $\bu = \bu_i$. Similarly, the $j$-th column of $\bY$ is defined by $I_C = j$, which implies $\bv = \bv_j$. Therefore $Y_{ij} = \bbE[\langle u,v\rangle | I_A = i, I_C = j]$.
    }
    \label{fig:causalfactor}
    
\end{figure}

\paragraph{Organization of the paper and contributions.} 
After an overview of related work in \rref{sec:relatedwork}, we review SCMs and formally define our model class in \rref{sec:setup}.
We specialize to linear models in \rref{sec:theory}, where we establish the following results:
\begin{itemize}[noitemsep,topsep=0pt, leftmargin=5.5mm]
    \item We generalize the model class from \cite{squires2022causal} to allow for the context $I_C$ to be defined in terms of the system's state, which was not previously allowed.
    In \rref{thm:linear}, we show that our new SCM-based model class implies an LFM with fixed action effects.
    
    \item We propose a slight modification to the \textit{Synthetic Interventions} estimator of \cite{agarwal2021causal} that accounts for fixed action effects.

    \item We give additional conditions on contexts and actions, which imply further structure on $\bL$; see \rref{prop:lineargaussian} and \rref{cor:fe}.
    We show how this additional structure can be leveraged to improve estimation and hypothesis testing.
\end{itemize}
Finally, in \rref{sec:experiments}, we compare the performance of several causal prediction approaches on the PRISM drug repurposing dataset, showing that our method outperforms alternative approaches.

\section{Related Work} \label{sec:relatedwork}

Since this paper bridges between latent factor models and structural causal models, the potential scope for related work is vast.
We limit our discussion to works that consider the index-only setting.
These works are complementary to recent works which give extrapolation guarantees for more general/structured action spaces, such as \cite{saengkyongam2023identifying} and \citet{agarwal2023synthetic}.
A summary of the algorithms presented can be found in Table \ref{table}. We provide the formulae for these algorithms in \rref{appendix:formulas}.
Finally, recall that $\bL \in \bbR^{m \times n}$ with $L_{ij} = \bbE[Y_{ij}]$ and that $\Omega$ denotes the set of observed action-context pairs.

\paragraph{Local Approaches to Matrix Completion.} 
% One of the classical examples of matrix completion tasks arises in recommendation systems, where the goal is to predict the interest of a user in a certain item. 
%
% In recommendation systems, Collaborative Filtering (CF) techniques for matrix completion have been widely used.
We call a causal prediction approach \textit{local} if it predicts individual entries $L_{ij}$, rather than predicting the entire matrix $\bL$ in one go.
Many local approaches predict $L_{ij}$ via some weighted average $\sum_{(k,\ell) \in \Omega} w_{k\ell} Y_{k\ell}$, with different choices for $w_{k\ell}$ giving rise to different methods.
For example, in \textit{Collaborative Filtering} (CF) \citep{schafer2007collaborative,amazon} and nearest-neighbors approaches \citep{dwivedi2022doubly}, the weight $w_{k\ell}$ depends on some measure of data-dependent similarity (e.g. cosine similarity) between actions $i$, $k$ and contexts $j, \ell$.
%
% These nearest-neighbors approaches involve computing a weighted average based on some similarity between rows/columns (e.g. cosine similarity) \citep{schafer2007collaborative, amazon}. 

Very simple examples of local, weighting-based estimators include the \textit{mean-over-contexts} estimator (which has $w_{i'j'} \propto \kron_{i' = i}$ with the weights summing to one), and the similarly-defined \textit{mean-over-actions} estimator.
The estimators are appropriate for models with \textit{only} fixed action effects and fixed context effects, respectively.
%
% Applying this approach to $\bY^T$ yields the \textit{mean-over-contexts} estimator. Hence, these approaches are non-symmetric.
% 
To handle a two-way fixed effect model (one with both fixed action and context effects), the mean-over-actions and mean-over-contexts estimators can be combined into the \textit{Fixed Effects} (FE) estimator.
More precisely, this estimator corresponds to taking an average of the fixed action effect estimators presented in \cite{squires2022causal}.

%
% Recently, have studied a variation of this estimator. 
% %
% Instead of summing over all possible rows and columns, they propose to follow a nearest-neighbor strategy based on the distance between different rows/columns.

% Since both causal inference and matrix completion are missing data problems, some connections have already been established between these two fields. 
Connections between causal prediction and matrix completion are explored in a number of works.
In particular, the widely-used \textit{Synthetic Controls} method predicts outcomes for treated units under the counterfactual setting where they received no treatment \citep{abadie, doudchenko2016balancing, bai2019mccounterfactuals}.
\citet{agarwalSI} generalized this idea to predict counterfactuals under treatment with the \textit{Synthetic Interventions} (SI) estimator, which has also been connected to causal matrix/tensor completion \citep{agarwal2021causal, squires2022causal}.
%
% This estimator is symmetric for the matrix case \citep{agarwal2021causal}, but it is non-symmetric for third-order tensors (e.g. see \cite{squires2022causal}). 

% Most of the causal intuition behind matrix completion comes from a factor model assumption \citep{agarwalSI, agarwal2021causal, bai2019mccounterfactuals}. The reasoning used for this type of factorization is usually the one in \rref{fig:causalfactor} (e.g. \citep[sect.~3.3]{agarwal2021causal}).
% %
% We argue that this is a limited model because it assumes that an action cannot change the state of a context. Moreover, in this setup, there is no difference between conditioning and intervening on $A$ and $C$. In \rref{sec:theory}, we overcome these limitations by studying a broader class of causal models.
%In particular, \citet{ch2020causal} give a biologically-motivated example of a causal DAG which implies a tensor factorization that satisfies the assumptions of SI\footnote{It can also be seen as a particular example of \rref{fig:causalfactor} for the tensor case.}. However, this example lacks a counterfactual perspective, which we believe is essential in causal matrix completion.
%$a_i \rightarrow u_i \rightarrow Y_{ij} \leftarrow v_j \leftarrow c_j$.

\paragraph{Global Approaches to Matrix Completion.} Other approaches, such as \textit{nuclear norm minimization} (NNM) \citep{candes2009matrix}, predict the entire matrix $\bL$ in one go, often by casting matrix completion as an optimization problem and developing fast optimizers \citep{cai2008singular, candes2008exact, mazumder10a}. 
In fact, there are also global approaches that take advantage of latent factor models \citep{hastie2014matrix, jain2012lowrank}.
There is a vast literature studying the optimality properties of NNM and its variations \citep{candestao, Recht_2010, zhang2015exact}. 
However, these results rely on the assumption of uniformly at random observations, which is usually not satisfied in real datasets, especially for biological applications (see \rref{sec:experiments}). 
Thus, we might prefer to use \textit{local approaches} when the missingness pattern is far from uniform.
%

% Most matrix completion theory is based on a low-rank assumption, where $\bY = \bL + \cE$ for a low-rank matrix $\bL$ and noise $\cE$. 
%
% However, trying to find a matrix of the lowest possible rank that is compatible with the observed entries is NP-hard. 
% %
% This optimization problem can be relaxed if, instead of minimizing the rank, we minimize the nuclear norm. 
% %
% This leads us to the \textit{nuclear norm minimization} (NNM) estimator \citep{candes2009matrix}, which has the nice property that it is convex and can be solved using fast optimization programs \citep{cai2008singular, candes2008exact, mazumder10a}. 
% %
% Moreover, \citet{candestao} proved that, in the noiseless case and under certain assumptions, NNM recovers the exact matrix with high probability. 
% %
% But the proof relies on the assumption of uniformly at random observations, which is usually not satisfied in real datasets, especially for biological applications (see \rref{sec:experiments}).

% Loosely speaking, the assumptions are that $Y$ has some non-sparse structure (called "incoherence"), that the entries in $\Omega$ are sampled uniformly at random, and that $| \Omega |$ is big enough.

%Some of these ideas can also be extended to the tensor case \citep{zhang2015exact}.

Among global approaches, the one most related to the present work is an extension of NNM by \citet{Athey_2021}.
This approach excludes fixed effects from regularization, hence we call it \textit{NNM-FE}.
%
% Moreover, they also provide theoretical guarantees of this estimator when the observed entries are not sampled uniformly at random.
%
Roughly speaking, if we let $\hat{\bY}^{\text{FE}}$ denote the fixed effect estimator for $\bY$, NNM-FE is similar to using NNM on the matrix $\bY - \hat{\bY}^{\text{FE}}$.
In \rref{sec:theory}, we use the same idea to improve the SI estimator when the model class includes fixed effects.
%Another way of enforcing this low-rank structure is to use the so-called factor models. For instance, we could also express $\bL = \bU_a \bV^{\top}_c$, claiming that the rows of $\bU$ are a latent representation of drugs and the rows of $\bV$ are a latent representation of the columns. However, imposing $\bL = \bU_a \bV^{\top}_c$ on \ref{nnm} leads us to a non-convex optimization problem. Despite this, some algorithms work reasonably well in practice  \citep{jain2012lowrank, hastie2014matrix}, but the theoretical guarantees still require uniformly at random observations. In Section \ref{SyntheticInterventions}, we describe an alternative to deal with these types of factor models.

\paragraph{Causal Prediction in Biological Applications.} We use the DepMap PRISM dataset \citep{depmap}, which measures the viability score of drug and cell line combinations.
\citet{radhakrishnan2022transfer} used additional information from the Connectivity Map (CMAP) dataset \citep{depmap} to predict the viability scores from DepMap, going beyond the index-only setting considered here.
In the index-only setting, \citet{squires2022causal} used Synthetic Interventions for the CMAP dataset while \citep{hodos2018cell} used a nearest-neighbor approach.
%As a possible future work, one can try to combine matrix completion techniques with information from other datasets to obtain even better results.

% {\color{red} RECENT PAPERS:

%  still imposing an "artifical" factor model. Secton 3.2. Also not sure about their low-rank assumption. A ``global" approach I think?

% \citet{saengkyongam2023identifying} assume they have observed some covariates x. 
% Algorithm Rep4Ex considers $\mathbf{E}(Y | \DO(A = a))$ no counterfactual. 
% %
% Although they mention something related to counterfactuals in the last part of section 2. 
% %
% I understand they try to identify some parameters from the causal model, so it's not basic matrix completion.
% %
% We could also add a short section
% }
\begin{table}[t!]
\vspace{0.2cm}
\begin{center}
\begin{tabular}{r||c|c|c}
%\hline
\multicolumn{1}{r||}{Algorithm} & 
\textit{Model Class}
& 
\textit{Symmetric }
& 
\textit{Dependence on $|\Omega|$}
\\ 
\hline
Mean-Over-Contexts  & Fixed Action Effect Model & \xmark & Low  \\
Mean-Over-Actions  & Fixed Context Effect Model & \xmark & Low  \\
Fixed Effects (FE)  & Two-way Fixed Effect Model & \cmark  & Low  \\
Collaborative Filtering (CF)        & Mixture Model   & \xmark        & Medium \\
\hline
Synthetic Interventions (SI)    & Latent Factor Model  & \cmark *      & Medium \\
SI-mean-contexts & Latent Factor Model   & \xmark       & Low    \\
SI-FE     & Latent Factor Model  & \cmark * & Low                          
\\
Nuclear Norm Min. (NNM)        & Low Rank   & \cmark  & High \\
NNM-FE  & Low Rank & \cmark       & Low 
\end{tabular}
\end{center}
\vspace{-0.5cm}
\caption{\textbf{Summary of the algorithms}. 
\textit{Model Class} indicates that the algorithm is consistent when the true data-generating process belongs to that class, e.g., Synthetic Interventions is consistent under a factor model.
\textit{Dependence on $|\Omega|$} (the number of observed entries) is a qualitative judgement based on the performance results shown in \rref{sec:experiments}.
\\ {\footnotesize *SI is symmetric for matrices but it is non-symmetric for third-order tensors.}}
\label{table}
\end{table}
\section{Background and Setup} \label{sec:setup}

We begin by reviewing relevant background on structural causal models \citep{peters2017elements}, and then use the concepts to define the model class that we consider for causal prediction.

\subsection{Background}
\begin{defn}[Structural Causal Model (SCM)]
A \emph{structural causal model (SCM)} is defined by a tuple $\cS = (\bS, \bbP^{\cE})$, where $\bS = (S_1, \dots, S_q)$ is an indexed set of $q$ \emph{causal mechanisms}
\[
S_k : \quad Z_k = f_k(\pa(Z_k), \varepsilon_k), \quad k = 1, \dots, q.
\]
Here, $\pa(Z_k) \subseteq \{Z_1, \dots, Z_q\} \setminus \{Z_k\}$ are called the \emph{parents} of $Z_k$, and $\bbP^\cE = \bbP^{\varepsilon_1, \dots, \varepsilon_q}$ is the joint distribution of the \emph{exogenous noise variables} $\varepsilon_1, \ldots, \varepsilon_q$.
The \emph{causal graph} $\cG$ of an SCM has a directed edge $Z_l \to Z_k$ for all $k$ and for all $Z_l \in \pa(Z_k)$.
We assume that $\cG$ is acyclic, i.e., it is a DAG.
\end{defn}

Unless otherwise noted, we assume that $\bbP^\cE$ is a product distribution, i.e., that the noise variables $\varepsilon_1, \ldots, \varepsilon_q$ are jointly independent.
An SCM is called \textit{Gaussian} if $\varepsilon_k \sim \cN(0, \sigma_k^2)$ with $\sigma_k > 0 \; \forall k$. 
It is called \textit{linear} if all causal mechanisms $f_k$ are linear. 
In particular, in a linear SCM, the causal mechanisms are defined by some parameters $B_{l,k} \in \bbR$ and they can be written as
\begin{equation}\label{eq:linear}
    Z_k = \sum_{Z_l \in \pa(Z_k)} B_{l,k}Z_l + \varepsilon_k 
    \quad\quad \textnormal{for~all~}k = 1, \dots, q    
\end{equation}

% Once we have a system that is described by a causal SCM, we can perform experiments that may change its underlying causal structure. For instance, if we set the value of $X_j$ to 0 (e.g. gene knockout) we are removing all the dependencies between $X_j$ and its parents. Therefore, we would be removing all the edges pointing to $X_j$. In the causal literature, this is called an \textit{intervention}. For this particular example we would write $do(X_j = 0)$.

% Note that this is replacing the structural equation $S_j$ setting $f_j \equiv 0$. There are different types of interventions \citep{eberhardt2007interventions} but, in this document, we will only consider $do$-interventions (setting variables to a specific real value).

% \paragraph{Interventions and Counterfactuals.} 
% %
% As noted, intervening and conditioning on random variables are not the same. 
% %
% When we intervene on a variable, we \textit{actively} change its causal mechanism ($f_k$).
% %
% In this paper, we consider general \textit{soft} interventions, which corresponds to changing $f_k$ to any other function that depends only on $\pa(Z_k)$ and $\varepsilon_k$.
% %
% Our result apply to the more specific case of $\DO$-interventions, which correspond to changing $f_k$ to a constant assignment, see \cite{peters2017elements} for more details.
% %
% One the other hand, when we condition on a variable, we \textit{passively} observe that it takes a particular value.
%
We now define interventions and counterfactuals, mostly following the notation of \cite{peters2017elements}.
Fix an SCM $\cS = (\bS, \bbP^{\cE})$ over nodes $\bZ$.
A \textit{(soft) intervention}\footnote{The terminology for different classes of interventions is fairly inconsistent. Soft interventions have many other names, e.g., \textit{parametric interventions} \citep{eberhardt2007interventions} or \textit{mechanism shifts} \citep{tian2001causal}. Our definition for soft interventions contains \textit{perfect (hard) interventions} and \textit{do-interventions} as special cases, see \citep{squires2022review} for additional terms for these classes.} 
$I$ is defined by a set $T(I) \subseteq \bZ$ of \textit{intervention targets}, and an indexed set $\{ f_k^I \}_{k \in T(I)}$ of \textit{interventional causal mechanisms}, where $f_k$ is generally allowed to be a function of $\pa(Z_k)$ and $\varepsilon_k$.
Given an intervention $I$, we define the \textit{interventional SCM} as $\cS_I = (\bS^I, \bbP^\cE)$, where $(\bS^I)_k = \bS_k$ if $k \in T(I)$ and $(\bS^I)_k = \bS_k$ otherwise.

On the other hand, letting  $\bC \subseteq \bZ$ and conditioning on $\bC = \bc$, we define the \textit{counterfactual SCM} as $\cS_{\bC = \bc} = (\bS, \bbP^{\cE \mid \bC = \bc})$.
To combine counterfactuals and interventions (in that order), we define $\cS_{[\bC = \bc, I]} = (\bS^I, \bbP^{\cE \mid \bC = \bc})$.
This SCM entails a new joint distribution over $\bZ$, which we denote by $\bbP(\bZ \mid [\bC = \bc, I])$, or in the special case of a do-intervention setting $\bA \subset \bZ$ to $\ba$, $\bbP(\bZ \mid [\bC = \bc, \DO(\bA = \ba)])$.

\subsection{An SCM-based Model Class for Causal Prediction}\label{sec:prob_setup}
We give general results for the setting where we observe a vector-valued outcome $\bY_{ij} \in \bbR^p$ for each action-context pair $(i, j)$.
This gives rise to a partially observed 3-order tensor $\bY \in \bbR^{m \times n \times p}$ where the  rows (first index) correspond to actions and the columns (second index) correspond to contexts.
Let $\Omega$ denote the set of indices $(i,j)$ for which we have observed $\bY_{ij} \in \bbR^p$.
Our ultimate goal is to impute the missing entries of $\bbE[\bY]$.
In the special case $p = 1$, $\bY$ reduces to an $m \times n$ matrix, and tensor completion reduces to matrix completion.
%
% Many of the algorithms presented in \rref{sec:relatedwork} can easily be extended to the tensor case.

% \textcolor{red}{TODO: mention that $I_A$ can be viewed as an \textit{instrumental variable} or \textit{regime indicator}.}
We assume that there is an underlying SCM over some $\bZ = (Z_1, \dots, Z_q)$ that defines our system (each possible context).
Let $I_A$ and $I_C$ be index variables that define the action and context, respectively. 
We add them to our SCM as follows.
First, we observe $I_C$ before applying any action. 
As $I_C$ defines a context, it is defined as a function of $\bZ$, so it is downstream from every node. 
Then, conditioning on $I_C = j$ corresponds to conditioning on some $\bZ_{\cC_j}$ where $\cC_j \subset [q] := \{ 1, \dots, q\}$.

On the other hand, the index $I_A$ can be thought of as an \textit{instrumental variable} \citep{newey2003instrumental} or a \textit{regime indicator} \citep{dawid2021decision}, which encodes the (unknown) intervention that each action applies.
In particular, using action $i$ corresponds to setting $I_A = i$, which induces an intervention on some set of variables $\bZ_{\cA_i}$, $\cA_i \subset [q]$.
Together, the intervention on $I_A$ and conditioning on $I_C$ gives rise a new SCM, and defines a new set of variables $\bZ(i)$ that represent the counterfactual state of the system.\footnote{This notation is inspired by Single World Intervention Graphs (SWIGs) \citep{richardson2013single}. Indeed, we would obtain a SWIG after performing $\DO$-interventions. However, we also consider soft interventions. We could write $I_A = \varnothing$ for defining the control state, i.e. no action. In that case $\bZ(\varnothing) = \bZ$.}
Finally, we only observe a subset of $\bZ(i)$, which we denote with some indices $\cX \subset [q]$, $|\cX| = p$, and the outcome $\bY_{ij}$ is drawn from the distribution $\bbP(\bZ_\cX(i) \mid I_C = j)$.
This definition of our model class is summarized in \rref{fig:general}.
Altogether, the expected value of the $(i,j)$-th entry of $\bL$ is 
\begin{equation}\label{eq:counterfactual}
    \bbE(\bY_{ij}) = \bbE\left(\bZ_\cX(i) \mid I_C = j \right)
\end{equation}
% Expressed in words:
% \begin{center}
%     \textit{What is the expected value of $\bZ_\cX$ had we applied action $i$ to a context, \\given that the context was in state $j$ before any intervention?}
% \end{center}
\begin{figure}[t]
    \centering
    \begin{center}
    \includegraphics[width=0.95\textwidth]{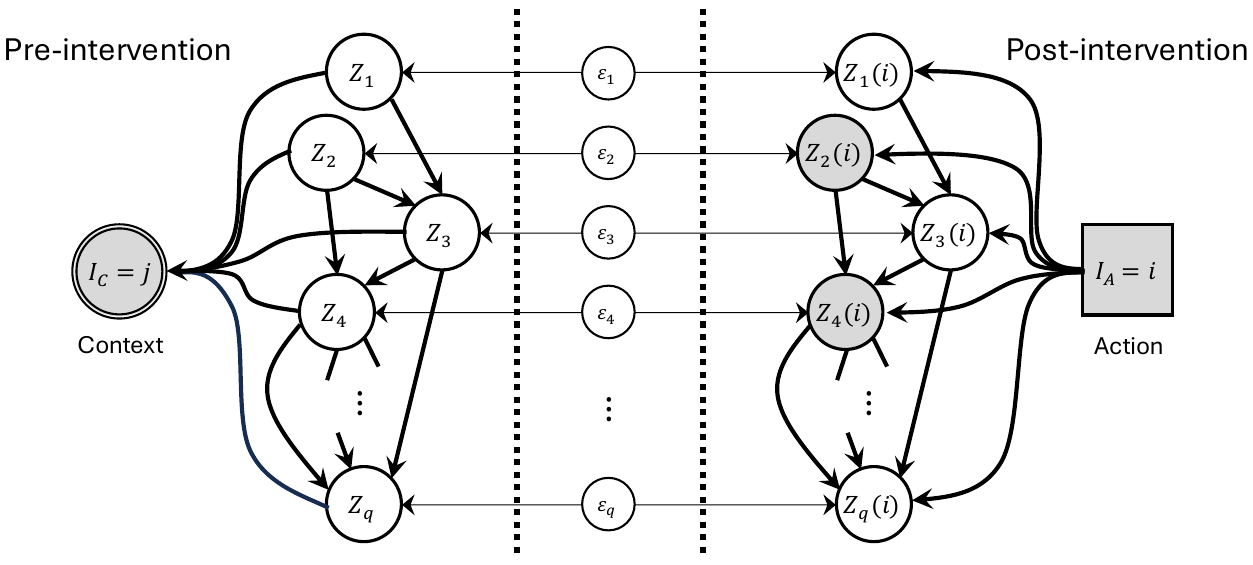}
    \end{center}
     \caption{
     \textbf{DAG defining our model class.} 
     Shaded nodes are observed, while unshaded nodes are unobserved.
     Data is generated by conditioning on the context index $I_C = j$ (indicated by the double-edge for the node $I_C$) followed by intervening to set the action index $I_A = i$ (indicated by the square node $I_A$).
     The context may potentially depend on any subset of $\bZ$ and the action may potentially affect any subset of $\bZ$.
     The exogenous noise terms $\varepsilon_1, \varepsilon_2, \ldots, \varepsilon_q$ are shared between the pre-interventional and post-interventional SCMs.
     Here, the observed outcome is $\bY_{ij} \sim \bbP ( \bZ_\cX(i) \mid I_C = j)$ for $\cX = \{ 2, 4 \}$.
     }
     \label{fig:general}
\end{figure}
We clarify our setup with a simple example which uses only $\DO$-interventions.
\begin{example}
    Let $Z_1 \to Z_2 \to Z_3$ follow an SCM $\cS = (\bS, \bbP^{\cE})$.
    Assume that context is defined solely based on the value of the marker gene $Z_3$, e.g., $I_C = 1$ if and only if $Z_3 = c_1$ and $I_C = 2$ if and only if $Z_3 = c_2$. This gives us a new distribution on the noise, $\bbP^{\cE \mid Z_3 = c_j}$ for each possible $c_j$.
    Assume also that each action corresponds to a $\DO$-intervention on the gene $Z_1$, e.g. if $I_A = 1$, then we have performed the intervention $\DO(Z_1 = a_1)$, if $I_A = 2$, then we have performed the intervention $\DO(Z_1 = a_2)$.
    So for $I_A = i$ we have a new set of structural equations $S^{\DO(Z_1 = a_i)}$. 
    Therefore, the SCM after conditioning and intervening is $\cS_{[Z_3 = c_j, \DO(Z_1 = a_i)]} = (\bS^{\DO(Z_1 = a_i)}), \bbP^{\cE \mid Z_3 = c_j})$.
    Finally, assume that we use an assay which measures only the gene $Z_2$.
    Then our matrix satisfies
    \[
    \bbE ( \bY_{ij} ) = \bbE (Z_2 \mid [ Z_3 = c_j, \DO(Z_1 = a_i) ] )
    \]
    where we use the notation $[Z_3 = c_j, \DO(Z_1 = a_i)]$ to emphasize that the order of the terms is important: the intervention is considered in the model obtained \textit{after} conditioning. Using our notation, shown in \rref{eq:counterfactual}, we would write $\bbE ( \bY_{ij} ) = \bbE (Z_2 (a_i )\mid Z_3 = c_j)$.
    % NOTE TO SELF: THIS IS USED IN JONAS PETER'S BOOK
\end{example}

\subsection{Synthetic Interventions}

The Synthetic Interventions (SI) estimator \citep{agarwalSI} is a local causal prediction approach for latent factor models.
To predict $\bY_{ij}$ for $(i, j) \not\in \Omega$, SI follows the following procedure: (1) take the set of \textit{columns} $\cC(i)$ for which we have observed row $i$, (2) take all \textit{rows} for which we have observed columns $\cC(i) \cup \{j\}$, (3) fit a linear regression on the available data defined by these subsets, and (4) use the linear regression coefficient to predict $\bY_{ij}$ as a linear combination of $\{ \bY_{i\ell} \}_{\ell \in \cC(i)}$.
Note that, analogously, we could use SI regressing along columns.
See \rref{appendix:formulas} for more details.

%It is also relevant to remark that \citet{agarwal2021causal} prove theoretical guarantees of SI (e.g. finite-sample consistency) for missing not-at-random patterns.

One of the main advantages of using SI is that, in contrast of NNM, it does not require significant assumptions on the missingness structure of the data. Instead, to prove theoretical guarantees, e.g. finite-sample consistency \citep{agarwalSI}, the data should be generated from a latent factor model, and also satisfy a \textit{linear span inclusion} assumption, namely that  $\bv_j \in \text{span}(\bv_k : k \in \cC(i))$. Intuitively, if $\cC(i)$ is sufficiently large in relation to the rank of our factor model, SI provides a consistent estimator.
\section{Theoretical Results} \label{sec:theory}
%\section{Linear case} \label{linear}

In this section, we establish expressions for the expected value of the entries of our tensor/matrix $\bY$. 
To connect these expressions to specific tensor completion approaches, we will call an estimator \textit{consistent} for an expression if it returns $\bbE[\bY_{ij}]$ when given as input the expected values for each observed entry, i.e. the values $\bbE[\bY_{uv}]$ for $(u, v) \in \Omega$.
This corresponds to taking a limit, when each observed entry is the average of $K$ independent samples of $\bY_{uv} \sim \bbP_{I_C = v ; \DO(I_A = u)}$, and we let $K \to \infty$. 
To simplify notation, for any $\bv \in \bbR^d$, we define $\onevec{\bv} = (1, v_1, \ldots, v_d)^\top \in \bbR^{d+1}$ as the vector given by prepending a 1 to $\bv$. 
The proofs of the following results can be found in \rref{appendix:proofs}.

\begin{assumption} \label{assumption:linear}
    The entries of $\bY$ come from a linear SCM over a set of variables $\bZ = (Z_1, \dots, Z_q)$ as in \rref{eq:linear}. That is, the $(i, j)-$th entry corresponds to the counterfactual when conditioning on $I_C = j$ and then intervening to set $I_A = i$, as in \rref{eq:counterfactual}.
\end{assumption}

\begin{thm}\label{thm:linear}
Under \rref{assumption:linear}, we have
\[
\bbE(\bY_{ij}) = \bbE\left(\bZ_\cX(i) \mid I_C = j \right) = U_i \bv_i + U'_i \bw_j.
\]
for some $U_i, U'_i \in \bbR^{|\cX| \times |Z|}, \bv_i \in \bbR^{|Z|}$ depending on the action index $i$, and some $\bw_j \in \bbR^{|Z|}$ depending on the context index $j$.
\end{thm}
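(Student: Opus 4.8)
The plan is to unfold the linear SCM of \rref{eq:linear} into reduced form and then track how the two operations defining $\bY_{ij}$ — intervening via $I_A=i$ and conditioning on $I_C=j$ — act on it. Write the base SCM in matrix form as $\bZ = B^\top\bZ + \bm{\varepsilon}$, where $B\in\bbR^{q\times q}$ collects the coefficients $B_{l,k}$ of \rref{eq:linear} and $\bm{\varepsilon}=(\varepsilon_1,\dots,\varepsilon_q)$. Since the causal graph $\cG$ is a DAG, $\bI-B^\top$ is invertible, so $\bZ = M\bm{\varepsilon}$ with $M:=(\bI-B^\top)^{-1}$. The structural fact to exploit is that $I_C$ is measured \emph{before} the action (see \rref{fig:general}), so conditioning on $I_C=j$ constrains only the exogenous noise, which is shared between the pre- and post-intervention SCMs.

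\textbf{Effect of the intervention.} Setting $I_A=i$ replaces the mechanisms of the nodes in $\cA_i$. These interventional mechanisms are again linear — a do-intervention replaces $Z_k$ by a constant $a_k$, a soft linear intervention replaces it by $\sum_l B^I_{l,k}Z_l+\varepsilon_k+a_k$ — so the counterfactual state solves a linear acyclic system $\bZ(i) = (B^{(i)})^\top\bZ(i) + S^{(i)}\bm{\varepsilon} + \ba^{(i)}$, where $B^{(i)}$ is the post-intervention coefficient matrix, $S^{(i)}$ is a diagonal $0/1$ matrix zeroing the noise entries of do-intervened nodes, and $\ba^{(i)}\in\bbR^q$ is a shift vector supported on $\cA_i$; all three depend only on $i$. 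Solving gives $\bZ(i) = G^{(i)}\bm{\varepsilon} + M^{(i)}\ba^{(i)}$ with $M^{(i)}:=(\bI-(B^{(i)})^\top)^{-1}$ and $G^{(i)}:=M^{(i)}S^{(i)}$.

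\textbf{Conditioning and renaming.} Since $I_C$ is a function of the pre-intervention state $\bZ=M\bm{\varepsilon}$, the event $\{I_C=j\}$ is an event on $\bm{\varepsilon}$ alone that does not involve $i$. Restricting to the rows in $\cX$ and using linearity of expectation,
\[
\bbE(\bY_{ij}) = \bbE\!\left(\bZ_\cX(i)\mid I_C=j\right) = (M^{(i)})_{\cX,:}\,\ba^{(i)} + (G^{(i)})_{\cX,:}\,\bbE\!\left(\bm{\varepsilon}\mid I_C=j\right).
\]
Now set $U_i:=(M^{(i)})_{\cX,:}\in\bbR^{|\cX|\times q}$, $\bv_i:=\ba^{(i)}\in\bbR^q$, $U'_i:=(G^{(i)})_{\cX,:}\in\bbR^{|\cX|\times q}$, and $\bw_j:=\bbE(\bm{\varepsilon}\mid I_C=j)\in\bbR^q$; the first three depend only on $i$, the last only on $j$, and $|Z|=q$. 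This gives $\bbE(\bY_{ij}) = U_i\bv_i + U'_i\bw_j$, and since $U_i\bv_i$ does not depend on $j$ it is exactly a fixed action effect, matching the interpretation stated after the theorem.

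\textbf{Main obstacle.} The delicate step is the second one: making the bookkeeping uniform across the whole class of (soft) interventions, so that in every case $\bZ(i)$ is affine in $\bm{\varepsilon}$ with its noise-independent part $M^{(i)}\ba^{(i)}$ and its linear part $G^{(i)}$ each a function of $i$ only — in particular verifying that do-interventions, which delete both parents and noise of the intervened node, still fit this template through $B^{(i)}$ and the mask $S^{(i)}$. A secondary point is confirming that $\{I_C=j\}$ involves only pre-intervention quantities, so that $\bbE(\bm{\varepsilon}\mid I_C=j)$ is well defined and action-independent; if $\cC_j$ pins $\bm{\varepsilon}$ to a deterministic value the argument is unchanged.
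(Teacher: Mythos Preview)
Your proof is correct and follows essentially the same route as the paper: pass to reduced form $\bZ=(\bI-B^\top)^{-1}\bm{\varepsilon}$, observe that the intervention only changes the coefficient matrix and the handling of noise on $\cA_i$, and use that $\{I_C=j\}$ is an event on the pre-intervention noise so that $\bw_j=\bbE(\bm{\varepsilon}\mid I_C=j)$ depends only on $j$. The only cosmetic difference is in the noise bookkeeping for intervened nodes---the paper replaces $\varepsilon_k$ on $\cA_i$ by a \emph{new} independent noise $\tilde\varepsilon_k$ (so $\bv_i=\bbE(\tilde\cE)$ and the mask is $\bbI_{\overline{\cA_i}}$), whereas you keep the original $\varepsilon_k$ for soft interventions and encode the action via a deterministic shift $\ba^{(i)}$ and a $0/1$ mask $S^{(i)}$; both conventions yield the identical decomposition $U_i\bv_i+U'_i\bw_j$.
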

\begin{proof}[Sketch]
For a linear SCM, we have $\bZ = (I - B)^\inv \cE$ with $B \in \mathbb{R}^{q \times q}$ and $\mathcal{E} = (\varepsilon_1, \dots, \varepsilon_q)$. 
Conditioning on $I_C\!=\!j$ gives a posterior $\bbP^{\cE \mid I_C = j}$ over the exogenous noise.
Intervening gives a new matrix $B_i$ and new exogenous noise $\tilde{\cE}_{\cA_i}$, where $\cA_i$ are the intervention targets when $I_A\!=\!j$.
Defining $\cE'$ = $\tilde{\cE}_{\cA_i} \cup \cE_{\overline{\cA_i}}$, then $\bZ(i)$ equals $(I - B_i)^\inv \cE'$ in distribution.
Thus, $\bbE(\bZ(i) \mid I_C = j) =  U_i \bv_i + U_i' \bw_j$ for $U_i = (I - B_i)^\inv$ and $U_i'$ a masked version of $U_i$, with $\bv_i$ and $\bw_j$ coming from $\bbE[\cE']$.
\end{proof}
%U' = U  \bbI_{\overline{\cA}}
%\begin{remark} \label{remark:SIcentered}
%\end{remark}
\rref{thm:linear} shows that $\bbE[\bY]$ follows a factor model with a fixed action effect.
The fixed effect can be folded into the factor model as follows:
\begin{equation}\label{eq:factorsi}
U_i \bv_i + U'_i \bw_j = \underbrace{\left(\begin{array}{@{}c|c@{}}
  \begin{matrix}
  & U_i \bv_i &\\
  \end{matrix}
  &
  \begin{matrix}
  & U'_i&\\
  \end{matrix}
\end{array}\right)}_{\bbR^{|\cX| \times (1 + |Z|)}}
\onevec{\bw_j}
\end{equation}
Thus, Synthetic Interventions will be consistent for completing the tensor. 
However, we propose an alternative to deal with the fixed effect.

\subsection{Theoretically-motivated extensions of Synthetic Interventions}
The factor model in \rref{eq:factorsi} has more structure than a general rank $|\bZ| + 1$ factor model. We have a fixed effect along rows plus a factor model of rank $|\bZ|$. The best approximation of a one-way fixed effect is given by the mean-over-contexts estimator (see \rref{appendix:oneFE}). Therefore, following the same idea from the NNM-FE \cite{Athey_2021}, we propose the following modification to SI: (1)  subtract the fixed effect, i.e., let $\bD = \bY - \hat{\bY}^{\text{mean-over-contexts}}$, (2) run SI on this matrix and obtain $\hat{\bD}^{\text{SI}}$, (3) return the estimate $\hat{\bY}^{\text{mean-over-contexts}} + \hat{\bD}^{\text{SI}}$.
% we could use SI for predicting $\bY - \hat{\bY}^{\text{mean-over-contexts}}$. 
Intuitively, by removing the exact fixed effect, our factor model would have rank $|\bZ |$ instead of $|\bZ|+1$, so it would be easier to satisfy the linear span inclusion assumption.

\paragraph{Tensor case.} The non-symmetry in the factor model from \rref{eq:factorsi} is even more relevant in the tensor case, i.e. $|\cX|>1$ (we have a matrix that depends on $i$ and a vector that depends on $j$). Therefore, it is more reasonable to regress along contexts than along actions. To show how SI is non-symmetric for tensors, in \rref{appendix:SItensor}, we provide a detailed example for a two-node case where we observe both $Z_1$ and $Z_2$. Following our setup, we provide a $4\times 4 \times 2$ tensor that can only be completed if we use SI within columns (contexts). In particular, for using SI within rows we would need more data to satisfy the linear span inclusion assumption.

\subsection{Exploiting Additional Structure in the Model Class}
One drawback about the factor model from \rref{eq:factorsi} is that it has rank $|\bZ| + 1$, which might be considerably large.
We now consider two additional assumptions which impose additional structure on how $I_A$ and $I_C$ interact with the SCM over $\bZ$.
First, we consider models where the all contexts are defined by some common set of system variables.

\begin{assumption} \label{assumption:context_gaussian}
    There is some $\cC \subset [q]$ such that $\cC_j = \cC$ for all $j \in [n]$.
\end{assumption}
% Under \rref{assumption:context_gaussian}, there are $|\cC|$ \textit{context markers} that completely define our context. 
% %
% For example, there may exist some small set of marker genes that define cell states.
% %
% In this case, the latent factor model will have lower rank.
%Gaussianity is needed to use the Schur complement.
In our setting, where we observe only a very coarse-grained context index, such as cell type, it is reasonable to suppose that (at least approximately) the index can be determined from only a small set of \textit{context markers} $\mathcal{C}$. 
For example, there may exist some small set of marker genes that define cell states.
\rref{assumption:context_gaussian} is important to consider, since it can have a substantial effect on identifiability and estimation by reducing the rank of our factor model to $|\mathcal{C}|$. 

\begin{prop} \label{prop:lineargaussian}
    Let Assumptions \ref{assumption:linear} and \ref{assumption:context_gaussian} hold, and assume that the underlying SCM is Gaussian.
    Then the latent factor model from \rref{thm:linear} is
    \[
    U_i'\bw_j = U_i'W \bc_j
    \]
    where W is a constant matrix and $\bc_j$ is the value that context markers take to define context $j$. Therefore, by defining $U''_i = U'_i W \in \bbR^{|\cX| \times |\cC|}$ we obtain a latent factor model of rank $|\cC|$.
\end{prop}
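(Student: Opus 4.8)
The plan is to trace through the proof of \rref{thm:linear} to pin down exactly what the vector $\bw_j$ is, and then invoke the Gaussian conditioning formula to show that $\bw_j$ is a fixed linear image of $\bc_j$, with a coefficient matrix $W$ that depends only on the parameters $(B, \sigma_1^2, \ldots, \sigma_q^2)$ of the pre-intervention SCM — in particular not on the action $i$ or the context $j$. Once this is in hand, the claim follows by substitution.

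First I would recall from the proof of \rref{thm:linear} that the decomposition $\bbE[\bY_{ij}] = U_i\bv_i + U'_i\bw_j$ is obtained by writing $\bbE[\cE' \mid I_C = j]$ for $\cE' = \tilde{\cE}_{\cA_i}\cup\cE_{\overline{\cA_i}}$, where the $\cA_i$-coordinates (the freshly chosen interventional part) contribute the $i$-dependent, $j$-independent summand $U_i\bv_i$, and the $\overline{\cA_i}$-coordinates contribute $U'_i\bw_j$ with $U'_i$ the mask of $U_i$ that zeroes the columns in $\cA_i$. Since $\cE_{\overline{\cA_i}}$ is a subvector of the pre-intervention exogenous noise $\cE$ and $\bbE[\,(\cE)_{\overline{\cA_i}} \mid I_C=j\,] = \big(\bbE[\cE \mid I_C=j]\big)_{\overline{\cA_i}}$, I may take $\bw_j = \bbE[\cE \mid I_C = j] \in \bbR^{q}$ as a single vector, with \emph{all} of the $i$-dependence absorbed into the mask $U'_i$. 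Under \rref{assumption:context_gaussian}, the conditioning event $I_C = j$ is exactly $\bZ_{\cC}=\bc_j$ for the common marker set $\cC$, so $\bw_j = \bbE[\cE \mid \bZ_{\cC} = \bc_j]$.

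Next I would apply the standard conditional-mean formula for jointly Gaussian vectors. Because the SCM is Gaussian, $\cE\sim\cN(0,\Sigma)$ with $\Sigma = \diag(\sigma_1^2,\ldots,\sigma_q^2)$, and $\bZ = (I-B)^{\inv}\cE$, so $(\cE,\bZ_{\cC})$ is jointly Gaussian with mean zero; hence
\[
\bw_j \;=\; \bbE[\cE \mid \bZ_{\cC}=\bc_j] \;=\; \Cov(\cE,\bZ_{\cC})\,\Var(\bZ_{\cC})^{\inv}\,\bc_j \;=:\; W\bc_j ,
\]
where $\Cov(\cE,\bZ_{\cC}) = \big[\Sigma (I-B)^{-\top}\big]_{[q],\,\cC}$ and $\Var(\bZ_{\cC}) = \big[(I-B)^{\inv}\Sigma(I-B)^{-\top}\big]_{\cC,\cC}$. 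The matrix $(I-B)^{\inv}\Sigma(I-B)^{-\top}$ is positive definite (as $\Sigma\succ 0$ and $I-B$ is invertible, since $\cG$ is a DAG), so its principal submatrix $\Var(\bZ_{\cC})$ is positive definite and thus invertible, and the resulting $W\in\bbR^{q\times|\cC|}$ depends only on $B$ and $\Sigma$. Plugging back into \rref{thm:linear} gives $\bbE[\bY_{ij}] = U_i\bv_i + U'_i W\bc_j$, and defining $U''_i := U'_i W\in\bbR^{|\cX|\times|\cC|}$ yields a fixed action effect plus a latent factor model of rank at most $|\cC|$.

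I expect the only genuine subtlety to be the first step: carefully matching notation with the proof of \rref{thm:linear} so that $\bw_j$ is correctly identified as the conditional mean of the \emph{pre-intervention} exogenous noise and is seen to be literally the same vector for every action $i$, with the action dependence living entirely in the mask $U'_i$. After that, the argument is a direct application of the Gaussian conditioning identity together with the routine observation that a principal submatrix of a positive definite covariance matrix is invertible.
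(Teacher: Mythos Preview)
Your proposal is correct and follows essentially the same approach as the paper: both identify $\bw_j=\bbE[\cE\mid \bZ_\cC=\bc_j]$ and use the Gaussian conditional-mean formula to write it as a fixed linear map of $\bc_j$. The only cosmetic difference is that the paper first computes $\bbE[\bZ\mid\bZ_\cC=\bc_j]=M\bc_j$ via the Schur complement and then sets $W=(I-B)M$, whereas you condition $\cE$ on $\bZ_\cC$ in one shot; the resulting $W$ is the same, and your explicit check that $\Var(\bZ_\cC)$ is invertible is a nice addition.
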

\begin{assumption}\label{assumption:actions}
    % All the interventions change the causal mechanisms of the underlying SCM in the same way. 
    There is some $\cA \subset [q]$ such that $\cA_i = \cA$ for all $i \in [m]$.
\end{assumption}
\rref{assumption:actions} indicates that all actions change the SCM in a similar way. Although this may not be realistic in most settings, we might have this setup for a particular sub-matrix of our whole matrix $\bY$.
For example, \rref{assumption:actions} may hold for drugs which use the same mechanism of actions, or if the different actions correspond to the same drug over different dosage concentrations.
In this case, the latent factor model reduces to a two-way fixed effects model.
%
% This assumption is satisfied, for example, if all the actions perform $\DO$-interventions on the same set of nodes.

% \begin{defn}[Consistency?]
%     In practice, for each entry $(i,j) \in \Omega$, we observe $K_{ij}$ samples $Y_{ij}^k$. So $Y_{ij} := Y_{ij}^{(K_{ij})} = \frac{1}{K_{ij}}\sum_{k}Y_{ij}^k$. By the Law of Large Numbers, $\lim_{K_{ij} \to \infty} Y_{ij}^{(K_{ij})} = \bbE(Y_{ij})$.
%     Given $(i,j) \in \cM$, let $\hat{Y}_{ij}^K$ be the estimation we get using a certain matrix completion approach, where $K = \min_{(ij) \in \Omega}K_{ij}$. We say a matrix completion approach is \text{consistent} for $(i,j)$ if 
%     \[
%     \plim_{K \to \infty}|\hat{Y}_{ij}^K - \bbE(Y_{ij})| = 0
%     \]
% \end{defn}

% \begin{defn}[Alternative]
%     Given $(i,j) \in \cM$, let $\hat{Y}_{ij}^\infty$ be the estimation we would get if we observed $\bbE(\bY)$ instead of $\bY$. We say this matrix completion approach is \text{consistent} for $(i,j)$ if 
%     \[
%     \hat{Y}_{ij}^\infty = \bbE(Y_{ij})
%     \]
%     And then explain that $\hat{Y}_{ij}^\infty = \plim_{K \to \infty}\hat{Y}_{ij}^K$.
% \end{defn}

% \begin{defn}[Alternative 2]
%     Given a matrix factorization of $\bY$, we say a matrix completion is consistent if it recovers the exact matrix in the noiseless case, i.e. when $\bY = \bbE(\bY)$, provided that we have observed enough entries.

%     And then explain that  $\lim_{K_{ij} \to \infty} Y_{ij}^{(K_{ij})} = \bbE(Y_{ij})$ for $(i,j)\in \Omega$.
% \end{defn}

\begin{cor} \label{cor:fe}
Under Assumptions \ref{assumption:linear} and \ref{assumption:actions} we have
\[
\bbE\left(\bZ_\cX(i) \mid I_C = j \right) = U \bv_i + U' \bw_j.
\]
so Fixed Effect (FE) is a consistent estimator for completing $\bY$.
\end{cor}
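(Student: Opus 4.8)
The plan is to specialize the proof of \rref{thm:linear} using \rref{assumption:actions}. Recall from that proof that, writing the base linear SCM as $\bZ = (I-B)^{-1}\cE$, the counterfactual state under action $i$ satisfies $\bZ(i) \disteq (I-B_i)^{-1}\cE'$, where $B_i$ is the post-intervention coefficient matrix and $\cE' = \tilde\cE_{\cA_i}\cup\cE_{\overline{\cA_i}}$ combines the fresh (intervened) exogenous terms on the target set $\cA_i$ with the original noise on its complement. Conditioning on $I_C = j$ and taking expectations, the vector $\bbE[\cE' \mid I_C = j]$ splits into a block supported on $\cA_i$ whose entries $\bbE[\tilde\cE_{\cA_i}]$ depend only on $i$, and a block supported on $\overline{\cA_i}$ whose entries $\bbE[\cE_{\overline{\cA_i}} \mid I_C = j]$ depend only on $j$. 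Restricting to the observed coordinates $\cX$ and premultiplying by $(I-B_i)^{-1}$ yields the decomposition $\bbE(\bZ_\cX(i) \mid I_C = j) = U_i \bv_i + U'_i \bw_j$ of \rref{thm:linear}, in which $U_i$ is the submatrix of $(I-B_i)^{-1}$ with rows in $\cX$, $U'_i$ is its further restriction to the columns in $\overline{\cA_i}$, $\bv_i$ collects $\bbE[\tilde\cE_{\cA_i}]$, and $\bw_j$ collects $\bbE[\cE_{\overline{\cA_i}} \mid I_C = j]$.

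Next I would invoke \rref{assumption:actions}. Since $\cA_i = \cA$ for every $i$, the index set defining the restriction to $U'_i$ no longer depends on $i$; and since, in our model class, the intervention induced by $I_A = i$ enters the linear SCM only through the intercepts (equivalently, the exogenous-term means) of the nodes in $\cA$ --- for a do-intervention this amounts to zeroing exactly the coefficient columns indexed by $\cA$, and for a regime-indicator or shift intervention it leaves the $\bZ$-to-$\bZ$ coefficients untouched --- the post-intervention coefficient matrix is a single fixed matrix $B_\cA$ independent of $i$. Hence $U_i \equiv U$ (the submatrix of $(I-B_\cA)^{-1}$ with rows in $\cX$) and $U'_i \equiv U'$, so that $\bbE(\bZ_\cX(i) \mid I_C = j) = U\bv_i + U'\bw_j$ with all the $i$-dependence carried by $\bv_i$ and all the $j$-dependence by $\bw_j$, as claimed.

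To conclude the statement about the FE estimator, set $\ba_i := U\bv_i$ and $\bb_j := U'\bw_j$, so that $\bbE[\bY_{ij}] = \ba_i + \bb_j$ entrywise: this is exactly a (possibly vector-valued) two-way fixed-effects model with no interaction term. Consistency of FE for completing such a matrix is then the standard fact that, given the true values on $\Omega$, the additive profile $\ba_i + \bb_j$ is determined on all of $[m]\times[n]$ --- the summands $\ba_i, \bb_j$ up to the harmless reparametrization $\ba_i \mapsto \ba_i + \bc$, $\bb_j \mapsto \bb_j - \bc$, which cancels --- whenever the bipartite graph on rows and columns induced by $\Omega$ is connected; since the FE estimator is precisely the average of the mean-over-contexts and mean-over-actions residual corrections of \cite{squires2022causal}, one checks directly that it returns $\ba_i + \bb_j$ in the $K\to\infty$ limit. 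The step I expect to require the most care is the claim that $B_i$ is independent of $i$ once $\cA_i$ is fixed: if one allowed arbitrary linear soft interventions that rescale the coefficients into the target nodes differently across actions, then $U_i$ would still vary with $i$ and the corollary would fail, so the argument genuinely relies on actions affecting the SCM only through intercept shifts of the nodes in $\cA$; making explicit the connectivity condition on $\Omega$ under which ``FE is consistent'' should be read is a second, minor point.
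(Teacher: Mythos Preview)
Your approach is exactly the intended one: the paper leaves \rref{cor:fe} as an immediate specialization of the proof of \rref{thm:linear}, and you have reconstructed that specialization correctly. In the paper's proof of \rref{thm:linear}, $U_i = [(I-B_i)^{-1}]_\cX$ and $U'_i = U_i\,\bbI_{\overline{\cA_i}}$; once $\cA_i \equiv \cA$, the masking matrix $\bbI_{\overline{\cA_i}}$ is fixed, and the corollary drops the subscript $i$ on $U$ and $U'$ without further comment.

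You are right to flag that fixing the target set alone does not literally force $B_i$ to be constant in $i$ under the paper's general definition of a soft intervention, which allows the interventional mechanism $f_k^I$ to change the coefficients into node $k$ and not merely its intercept. The paper's informal gloss on \rref{assumption:actions} (``all actions change the SCM in a similar way'', with the running example of a single drug at varying dosage) makes clear that the intended reading is precisely your restriction: the actions differ only in the exogenous means on $\cA$, so $B_i \equiv B_\cA$. Treat this as part of what \rref{assumption:actions} is meant to encode, and state it explicitly in your write-up; the paper does not.

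One small over-statement: the bipartite connectivity condition on $\Omega$ is not needed for the paper's FE formula to be consistent. Writing $L_{ij} = a_i + b_j$ and plugging exact values into the FE estimator in \rref{appendix:formulas}, the three terms telescope to $a_i + b_j$ whenever $\cC(i)$ and $\cR(j)$ are nonempty, with no appeal to identifiability of the individual profiles. Connectivity is what you would need to recover $a_i$ and $b_j$ separately (up to a shift), but the estimator never attempts that.
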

%we need to say that in cor:fe we are only consdiering hard interventions
In our previous results, we have considered only the case of a linear SCM.
This can be extended in the simplified case where (1) all interventions occur on the same set of nodes and (2) all contexts use the same context-defining nodes. 
%
% In \rref{appendix:polynomial}, we show that polynomial mechanisms still give rise to a factor model in this simplified setting.
% %
% In particular, it provides evidence that some form of linearity or low-degree polynomial assumption may be required to justify a connection between factor models and causal models.
While linearity may appear to be a strong assumption, it is closely related to the low-rank assumption, which is essential in matrix completion. In \rref{appendix:polynomial}, we study some examples of non-linear SCMs. In particular, when considering polynomials, we observe how the rank of the LFM increases as the degree of the polynomials increases. These results provide evidence that some form of linearity or low-degree polynomial assumption may be required to justify a connection between factor models and causal models.

Finally, in real applications, we need to test the validity of our assumptions, i.e. we need to test whether our tensor comes from a certain graphical model.
So far, we have only exploited the structure that the SCM induces on \textit{expectations} of $\bY$. 
However, we can also use the structure to obtain fine-grained implications regarding the variance of our observations.
This structure on the variance is helpful for inference and for hypothesis testing.
%
% of the cases mentioned above assuming Gaussian noise (see \rref{appendix:hypothesis_tests}).
%
We demonstrate this idea in \rref{appendix:hypothesis_tests}, where we propose a hypothesis test for the fixed effects model implied by \rref{cor:fe}.
In particular, we test \rref{cor:fe} implies homoscedastic errors within a row.
Thus, we develop a hypothesis test for homoscedasticity, and then a particular test for fixed effects which reduces to a Welch's test.
Similarly, for the latent factor model (LFM) case from \rref{prop:lineargaussian}, we can test homoscedasticity within columns, then use the test for LFMs proposed by \citet{agarwalSI}.
%
% For the general case from \rref{thm:linear} we would simply use the test for SI. 
% %
% We leave a more specific tests for this case as future work. 
%
In \rref{appendix:hypothesis_tests}, we demonstrate the performance of these tests on synthetic data.

\section{Empirical results} \label{sec:experiments}
% \begin{wrapfigure}{r}{0.5\textwidth}
%     \vspace{-.2in}
%   \begin{center}
%     \includegraphics[width=0.98\textwidth]{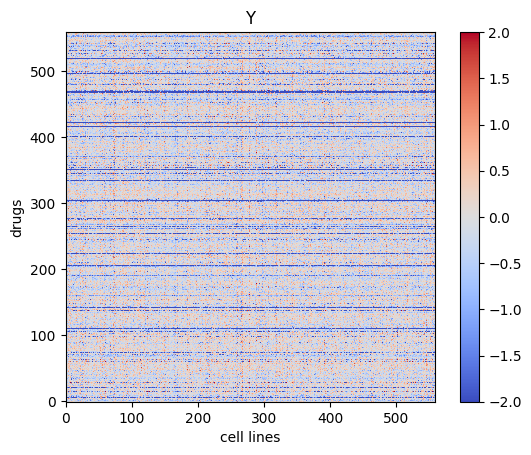}
%   \end{center}
%   \vspace{-.1in}
%   \caption{\textbf{Matrix of viability scores}. Each entry represents the viability score for a drug-cell line pair. Negative viability indicates cell death. Values are normalized for visualization purposes.}
%   \vspace{-.1in}
%   \label{fig:Ymatrix}
% \end{wrapfigure}

We work with the PRISM Repurposing dataset \citep{depmap}. 
The entries of $\bY$ measure a viability score, which indicates how lethal a drug is for a given cell line (i.e., negative viability means that a large proportion of cells die).
Since we consider the index-only setting, we use only the observed entries of this matrix to make our predictions.
To relate the PRISM data into the model class introduced in \rref{sec:theory}, we think of the initial cell states in terms of \rref{fig:viability-and-cell-diagram} (left).
In particular, we assume that there is some latent space (defined by the context variables) where we can perfectly distinguish different cell lines, e.g. the expression levels of some marker genes. 

\begin{figure}[t]
    \includegraphics[width=0.45\textwidth]{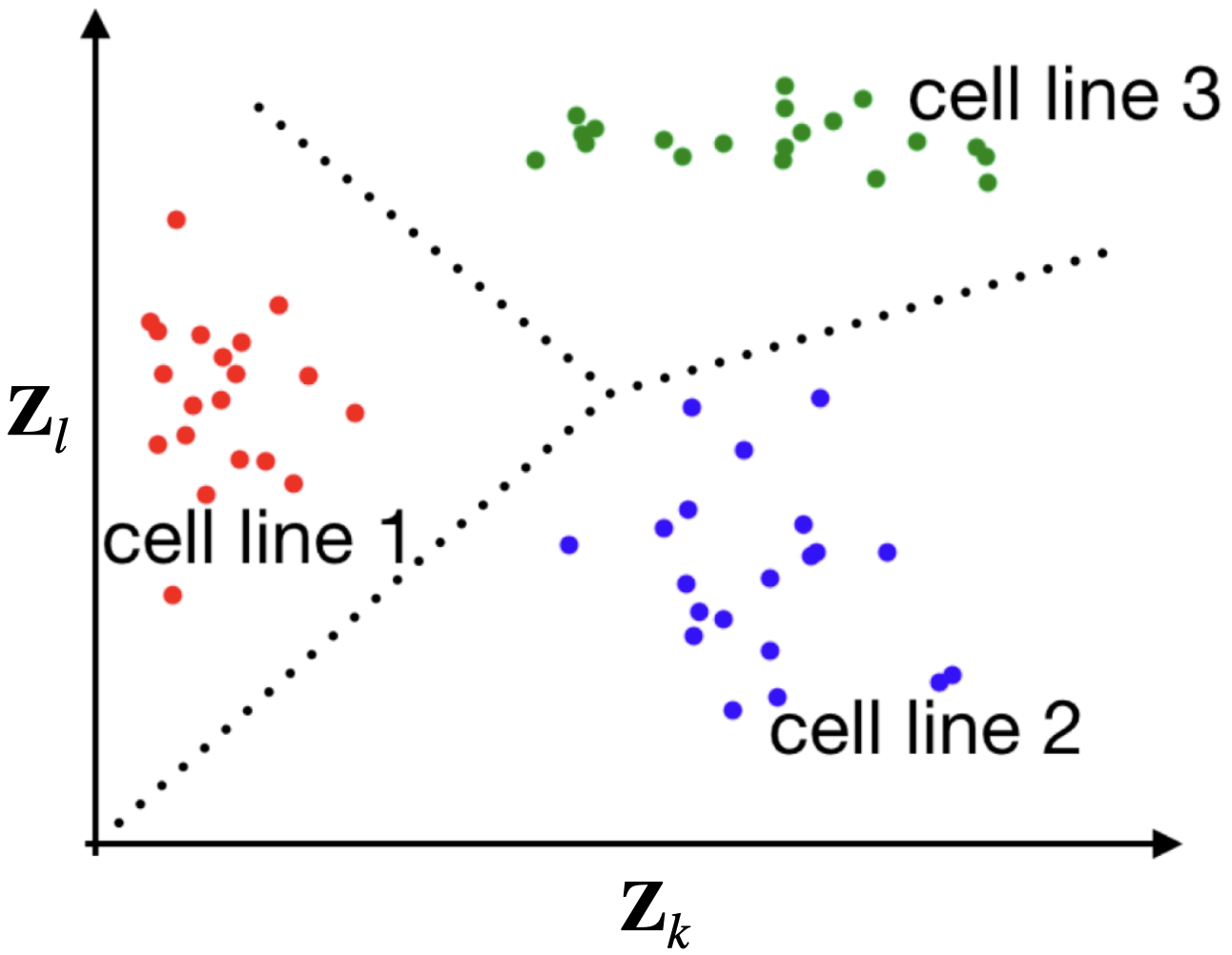}
    \quad
    \includegraphics[width=0.45\textwidth]{Images/Ymatrix.png}
    %\caption{Caption}
    % \label{fig:yminussvd1}
    \caption{
    \textbf{(Left) Translating the PRISM data to our model class}. Letting $\cC = \{ k, l \}$ indicates that cell state is defined solely in terms of $Z_k$ and $Z_l$.
    \textbf{(Right) Matrix of viability scores}. Each entry represents the viability score for a drug-cell line pair. Negative viability indicates cell death. Viability scores are normalized for visualization purposes.
    }
    \label{fig:viability-and-cell-diagram}
\end{figure}

Arranging drugs in rows and cell lines in columns, as in \rref{fig:matrixY} from \rref{sec:intro}, we obtain a $4,686 \times  568$ matrix. 
To check the symmetry of an algorithm, we want the same number of rows as columns, so that completing along one direction (e.g. columns) is not better simply because there is more data to make predictions. 
Thus, we run the experiments on a square submatrix to avoid of this bias. 
For the same reason, we use symmetric missing data patterns, as described in Section \ref{missingpattern}.
We compare the performance of the algorithms in terms of the $R^2$ score.
The baseline used in the denominator of the $R^2$ is the average over the missing data. 
As an example, for the matrix shown in \rref{fig:viability-and-cell-diagram} (right) and a missing pattern from \rref{fig:missingsquares}, the baseline MSE is approximately $0.87$. 
Getting high $R^2$ is a difficult task because the matrix is relatively noisy (see \rref{appendix:SVDanalysis}).

\begin{figure}[b]
\floatbox[{\capbeside\thisfloatsetup{capbesideposition={right,center},capbesidewidth=0cm}}]{figure}[0.5\textwidth]
{\caption{\textbf{Missing data patterns used in our experiments}. Observed entries are denoted with black. For each missing entry we have the same number of observations along rows and columns.}\label{fig:missingsquares}}
{\includegraphics[width=0.45\textwidth]{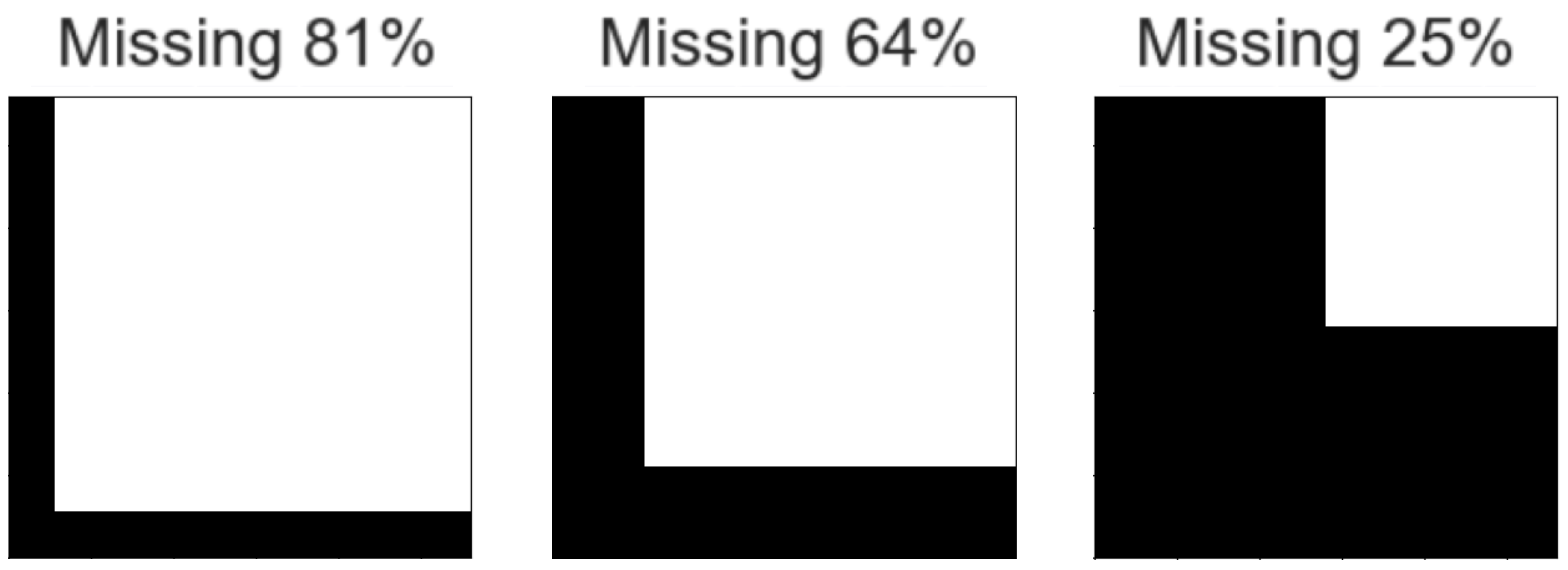}}
\end{figure}

% \subsection{Relation to causal modeling}

% In \rref{sec:theory}, we established our problem setup where the $j$-th column corresponds to conditioning on some variables $\bZ_{\cC}$ that summarize the cell state.
% %
% One should have in mind the picture in \rref{fig:viability-and-cell-diagram} (right): we assume that there is some latent space (defined by the context variables) where we can perfectly distinguish different cell lines, e.g. the expression levels of some marker genes.

\subsection{About the missing data pattern} \label{missingpattern}

In the PRISM dataset, we have observed all the matrix. As a consequence, we can decide the missing data pattern to test the performance of the matrix completion approaches. But which missing data pattern is more appropriate for biological data?

% \begin{wrapfigure}{t}{0.5\textwidth}
% \vspace{-.4in}
%   \begin{center}
%     \includegraphics[width=0.75\textwidth]{Images/diagram_cell_lines.png}
%   \end{center}
%   %\vspace{-.1in}
%   \caption{\textbf{Intuition behind our casual modeling}. This diagram shows that the variables $\bZ_{\cC}$ for $\cC = \{ k,l\}$ may be used for distinguishing between different cell lines.}
%   \label{fig:diagram}
%   \vspace{-.4in}
% \end{wrapfigure}

As we have seen in \rref{sec:relatedwork}, Nuclear Norm Minimization algorithms are near-optimal when we have uniformly at random missing entries. 
However, in biology applications this is usually not the case. Instead, it is more common to have a small portion of cell lines that has been tested against many of the drugs, but for the vast majority of cell lines we have only run a few experiments. For example, in \rref{appendix:missingcmap} we can see the missing data pattern for the CMAP dataset.

For this reason, we test the algorithms on missing patterns as the ones from \rref{fig:missingsquares}. Here, for each missing entry $(i,j)$, the number of non-missing entries (black squares) in the $i$-th row is equal to the number of non-missing entries in the $j$-th column. We run the experiments using different data patterns to see the effect of having less/more available data.

In \rref{appendix:curvedmissing} we consider an alternative missing data pattern where we do not have a constant number of observations for each missing entry. We can observe similar results also in this setting.

\subsection{Performance of matrix completion algorithms} \label{sec:performance}

\begin{figure}[t]
    \centering
    \includegraphics[width=\linewidth]{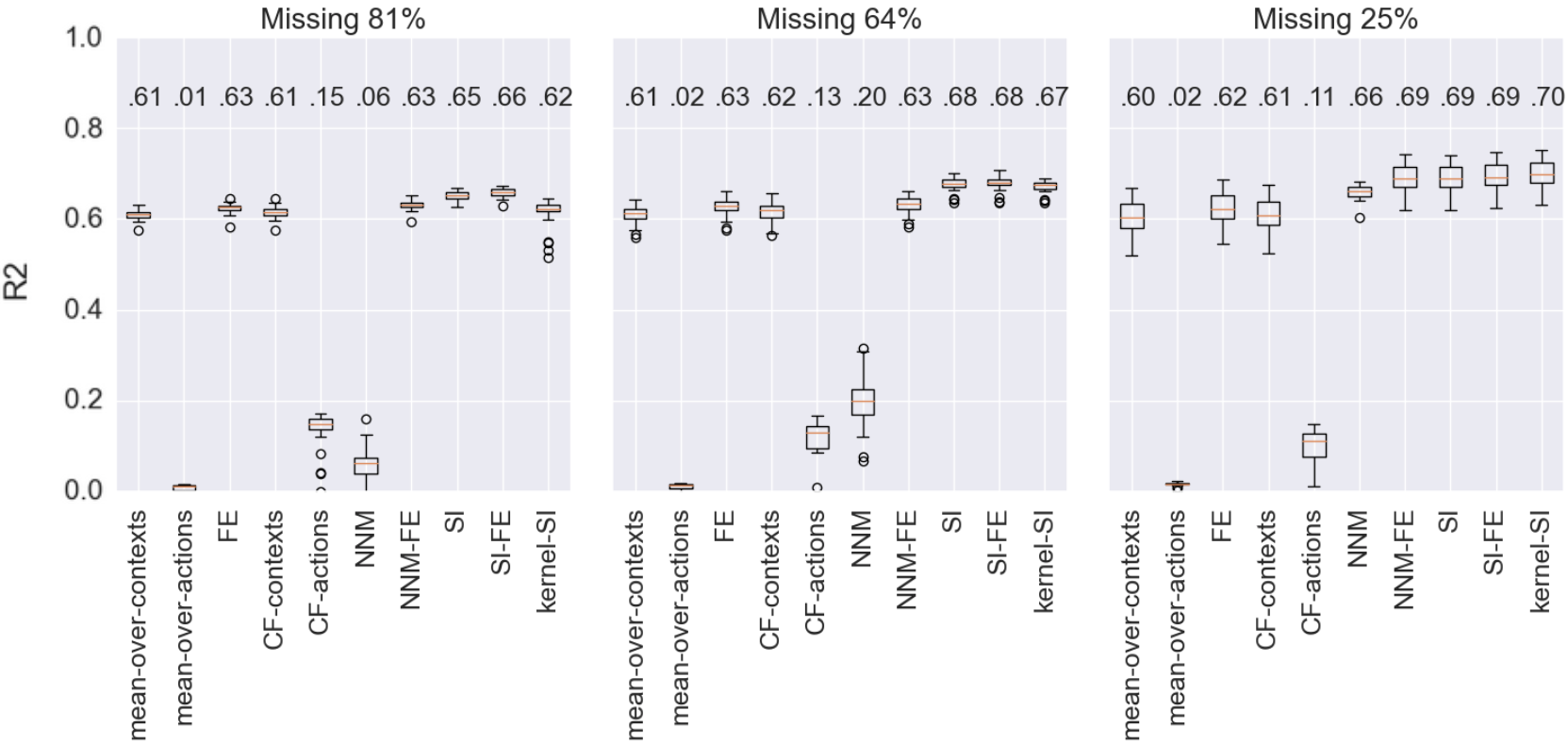}
    \caption{\textbf{Variants of Synthetic Interventions have the best performance on matrix completion for the PRISM dataset.} Using the missing data patterns are ones shown in \rref{fig:missingsquares}, we test the performance of each algorithm on 20 different shuffles of rows and columns. The results are shown using boxplots. The numbers at the top denote the median (red lines).}
    \label{fig:boxplots}
\end{figure}

In this section we show the performance of the matrix completion algorithms presented before.
%MSE of baseline is 0.87 (for which missing data pattern?)
In particular, we are interested in analyzing how this performance depends on the amount of observed data. An extensive analysis of this dependence can be bound in \rref{appendix:increasingobs}.
%TODO: Synthetic Interventions clearly outperforms the other linear estimators (which makes sense since it learns the weights from data), and it is competitive with the other approaches.

In \rref{fig:boxplots} we see that mean-over-contexts is a strong baseline and that Synthetic Interventions (and its variants) outperform all the other approaches. NNM performs competitively when we observe a sufficiently large amount of entries (right-hand side plot) but it performs quite poorly otherwise.

These results reinforce our theoretical findings. In 
\rref{sec:theory} we see how, assuming an underlying linear SCM, the counterfactual quantity of our interest leads to a particular factor model, implying consistency of the Synthetic Interventions estimator.
For further analysis, in \rref{appendix:boxplotsall} we report results for other variations of these approaches. 
In \rref{appendix:killerlung} we run similar experiments on a particular submatrix.
\section{Discussion} \label{sec:conclusion}

In this paper, we generalize the SCM-based model class introduced by \citet{squires2022causal} to allow for situations where the measured outcome is defined as a counterfactual.
We showed that latent factor models (LFMs) naturally arise out of this model class, though with an additional term for fixed action effects.
%
% We address the intrinsic counterfactual nature behind matrix completion, using causal tools to express these counterfactual statements. 
%
% In particular, we provided an exhaustive study of the linear case while connecting it with completion approaches, and provided basic results in the polynomial setting.
%
This demonstrates the fundamentally distinctive nature of \textit{causal} matrix completion, where there is a difference between conditioning (completing within columns) and intervening (completing within rows).
As a practical consequence, this led us to propose a simple extension of the Synthetic Interventions (SI) estimator which includes a fixed effect term.
%
% we showed that non-symmetric estimators such as SI-mean-contexts naturally arise from fixed effects terms in the expression for the counterfactual expectation.
%
These model-inspired causal matrix completion approaches work considerably well, especially in the low-data regime.

\paragraph{Limitations and Future Work.}
In this paper, we largely focused on linear models, and only sought to develop \textit{consistent} estimators for causal predictions.
An important next step is to analyze the noisy case, focusing on statistical aspects such as sample complexity and efficient inference.

A key limitation of our work is that we only considered \textit{index-only} causal prediction problems.
In many applications, additional data is available for actions and/or contexts, e.g., the molecular structure of a drug would be highly relevant to predicting its effect.
We expect that the current work will be a useful conceptual foundation for developing model-based approaches to these causal prediction problems, especially when developing further connections between SCMs and latent factor models.
It would be especially interesting to explore connections with recent works for intervention extrapolation, such as \cite{agarwal2023synthetic} and \cite{saengkyongam2023identifying}.

Finally, while our work was mainly motivated by a biological problem, the causal prediction problem is very general, and it would be interesting to apply our framework in other settings.

% WRITE THIS IN MORE DETAIL
%\textcolor{red}{Q?} apart from the Matrix Completion task, we could use these results for Causal Representation Learning tasks. To check the linear case, we could create a matrix and validate the completion with the corresponding algorithm.

% Acknowledgments---Will not appear in anonymized version
\acks{
This work was supported in part by funding from the Eric and Wendy Schmidt Center at the Broad Institute of MIT and Harvard.

{\'A}.~Ribot acknowledges support by the mobility grants program of Centre de Formaci{\'o} Interdisciplin{\`a}ria Superior (CFIS) - Universitat Polit{\`e}cnica de Catalunya (UPC), the MIT Simons MMLS Foundation research grant (6941629), and a MOBINT-MIF grant.

C.~Squires and C.~Uhler acknowledge support by the NSF TRIPODS program (DMS-2022448), NCCIH/NIH (1DP2AT012345), ONR (N00014-22-1-2116), the United States Department of Energy (DOE), Office of Advanced Scientific Computing Research (ASCR), via the M2dt MMICC center (DE-SC0023187), AstraZeneca, the Eric and Wendy Schmidt Center at the Broad Institute, and a Simons Investigator Award. 
}

\bibliography{refs}

\newpage

\clearpage
{
\hypersetup{linkcolor=blue}
\renewcommand{\contentsname}{Contents of Appendix}
\tableofcontents
}
\addtocontents{toc}
{\protect\setcounter{tocdepth}{2}}
\newpage

\section*{Supplemental material}

All code for running the experiments presented in this paper can be found at 

\begin{center}
\href{https://github.com/alvaro-ribot/causal-matrix-completion-PRISM}{https://github.com/alvaro-ribot/causal-matrix-completion-PRISM}
\end{center}
%Running the experiments will log the computation time, which is negligible in most experiments. 
%
% The longest exepriment is the one from \rref{appendix:increasingobs}, which took about 7 hours.

\appendix

\section{Summary of matrix completion approaches} \label{appendix:formulas}

In \rref{sec:relatedwork}, we presented the different matrix completion approaches considered in this paper. For completeness, in this section, we provide formulae for such approaches.

We follow the notation used in \cite{squires2022causal}. Let $\bY = (Y_{ij}) \in \bbR^{m \times n}$ be our matrix of interest. Let $\Omega \subset [m] \times [n]$ be the set of observed entries and $\cM  = [m] \times [n] \setminus \Omega$ be the set of missing entries. %We can also define a "window" matrix $W$ such that $W_{ij} = \mathbbm{1}_{(i,j) \in \Omega}$.
For a given row $i$, let $\cC(i) = \{ j : (i,j) \in \Omega \}$ be the set of column indices $j$ for which we have observed $Y_{ij}$. Similarly, for a column $j$ define $\cR(j) = \{ i : (i,j) \in \Omega\}$. This notation is extended to sets of rows ($\cR$) and sets of columns ($\cC$): $\cC(\cR) = \cap_{i \in \cR} \cC(i)$ and $\cR(\cC) = \cap_{j \in \cC} \cR(j)$. The following example clarifies our notation.
\[
\boxed{
\begin{pmatrix}
        Y_{11} & Y_{12} & ? & ? & Y_{15} \\
        ? & ? & Y_{23} & Y_{24} & ? \\
        Y_{31} & ? & Y_{33} & ? & Y_{35} \\
    \end{pmatrix}
    \quad \quad
\begin{matrix}
        \cC(1) = \{ 1, 2, 5\} \\
          \\
        \cC(3) = \{ 1, 3, 5\} \\
\end{matrix}
\implies 
\begin{matrix}
         \\
        \cC(\{ 1,3 \}) = \{ 1, 5\}  \\
         \\
\end{matrix}
}
\]
\begin{itemize}
    \item Mean-Over-Contexts: $\hat{Y}_{ij}^{\text{mean-over-contexts}} = \frac{1}{|\cC(i)|} \sum_{j' \in \cC(i)} {Y}_{ij'}$.
    \item Mean-Over-Actions: $\hat{Y}_{ij}^{\text{mean-over-actions}} = \frac{1}{|\cR(j)|} \sum_{i' \in \cR(j)} {Y}_{i'j}$.
    \item Fixed Effects (FE): $\hat{Y}_{ij}^{\text{FE}} = \hat{Y}_{ij}^{\text{mean-over-contexts}} + \hat{Y}_{ij}^{\text{mean-over-actions}} - \frac{1}{|\cR(j)||\cC(i)|} \sum_{\substack{i' \in \cR(j) \\ j' \in \cC(i)}}Y_{i'j'}$.
    \item Collaborative Filtering (CF): $\hat{Y}_{ij}^{\text{CF-contexts}} = \frac{1}{\sum_{j' \in \cR(i)} |\text{cos-sim}(j', j)|} \sum_{j' \in \cR(i)} \text{cos-sim}(j', j) Y_{ij'}$.
    \item Synthetic interventions (SI): Let $\cR_{\text{train}} = \cR(\cC(i) \cup \{ j \})$, $\bY_{\text{train}} = [Y_{i'j'} : i' \in \cR_{\text{train}}, j' \in \cC(i)]$, $\by_{\text{output}} = [Y_{i'j} : i' \in \cR_{\text{train}}]$, $\by_{\text{test}} = [Y_{ij'} : j' \in \cC(i)]$. Then solve linear regression:
    \[
    \hat{Y}_{ij}^{\text{SI}} = \by_{\text{test}}^\top \hat{\beta}, \quad \text{where }
    \hat{\beta} = \bY_{\text{train}}^\dagger \by_{\text{output}} \text { and $\dagger$ denotes the pseudoinverse.}
    \]
    %\item SI-mean-contexts
    \item SI-FE:  Let $\bD = \bY - \hat{\bY}^{\text{FE}}$, then 
    $\hat{Y}_{ij}^{\text{SI-FE}} = \hat{Y}_{ij}^{\text{FE}} + \hat{D}_{ij}^{\text{SI}} $.
    \item Nuclear Norm Minimization (NNM): $\hat{\bY}^{\text{NNM}} = \argmin_\bL \frac{1}{2} \|\bP_\Omega(\bY - \bL) \|^2_\text{F} + \lambda \| \bL\|_{*}$, where $(\bP_\Omega(\bM))_{ij} = M_{ij}$ if $(i,j) \in \Omega$ and $0$ otherwise.
    \item NNM-FE: $\hat{\bY}^{\text{NNM-FE}} = \hat{\bL} + \hat{\Gamma} {\bf 1}_n^{\top} + {\bf 1}_m \hat{\Delta}^{\top}$, where ${\bf 1}_n = (1, \dots, 1) \in \bbR^n$ and
    \[
    (\hat{\bL}, \hat{\Gamma}, \hat{\Delta}) = \argmin_{\bL, \Gamma, \Delta} \frac{1}{|\Omega|} \|\bP_\Omega(\bY - \bL - \Gamma {\bf 1}_n^{\top} - {\bf 1}_m \Delta^{\top} ) \|^2_\text{F} + \lambda \| \bL\|_{*}.
    \]
\end{itemize}
\section{Proofs} \label{appendix:proofs}

\textbf{Notation.} Recall that $[q] := \{1, \dots, q\}$. Given a tuple of indices $\cK = (k_1, \dots, k_p)$ where $k_r \in [q]$ for all $r = 1, \ldots, p$, we define the 
matrix $\bbI_\cK$ as follows:
\[
\mathbb{I}_{\cK} \in \mathbb{R}^{q \times q} \text{ s.t. } (\mathbb{I}_{\cK})_{rs} = \begin{cases}
    1 \text{ if } r = s \text{ and } r \in \cK\\
    0 \text{ otherwise}
    \end{cases}
\]
Furthermore, given a vector $\bZ \in \bbR^q$, we define $\bZ_\cK \in \bbR^{|\cK|}$ such that $(\bZ_\cK)_l = \bZ_{k_l}$. Finally, given a matrix $M \in \bbR^{q\times q}$, we denote by $[M]_\cK \in \bbR^{|\cK| \times q}$ the sub-matrix of $M$ whose rows indices are in $\cK$.
To simplify notation, for any $\bv \in \bbR^d$, we define $\onevec{\bv} = (1, v_1, \ldots, v_d)^\top \in \bbR^{d+1}$ as the vector given by prepending a 1 to $\bv$.

\newtheorem*{T1}{Theorem~\ref{thm:linear}}

\begin{T1}
Under \rref{assumption:linear}, we have
\[
\bbE(\bY_{ij}) = \bbE\left(\bZ_\cX(i) \mid I_C = j \right) = U_i \bv_i + U'_i \bw_j.
\]
for some $U_i, U'_i \in \bbR^{|\cX| \times |Z|}, \bv_i \in \bbR^{|Z|}$ depending on the action index $i$, and some $\bw_j \in \bbR^{|Z|}$ depending on the context index $j$.
\end{T1}

\begin{proof} \label{proof:linear}
Let ${\bf Z} = (Z_1, \dots, Z_q)$ follow a linear SEM, i.e.
\[
    {\bf Z} = B {\bf Z} + \mathcal{E}
\]
where $B = (B_{ij}) \in \mathbb{R}^{q \times q}$ such that $B_{ij} = 0$ whenever $i \leq j$ and $\mathcal{E} = (\varepsilon_1, \dots, \varepsilon_q)$.
Let $I$ be the identity $q \times q$ matrix.
Since $B$ is lower triangular, $(I - B)$ is invertible and we have that 
\[
{\bf Z} = (I - B)^{-1}\mathcal{E}
\]
When we condition on the value of the context index, i.e. on $I_C = j$, we get a new distribution for the noise, $\bbP^{\cE \mid I_C=j}$.
In particular, 
\[
\bw_j := \bbE(\mathcal{E} \mid I_C = j) \in \bbR^q
\]
is a function of $j$.

Now let $\cA_i \subseteq [q]$ be the target set of indices for the intervention with index $i$. For each $k \in \cA_i$, the $k$-th row of $B$ is changed (modifying the dependency from its parents). 
Let $B_i$ be the weight matrix after the intervention\footnote{In the particular case of $\DO$-interventions, we would remove these dependencies, so all the entries in the $k$-th row of $B_i$ would be 0.}.

Moreover, for each $k \in \cA_i$, we have a new noise variable $\tilde{\varepsilon}_k$, independent of $\cE$.
So, the expected value of these new variables depend on the intervention index $i$ but not on the context index $j$. 
Let $\tilde{\cE}$ be a $q$-dimensional random vector such that its $k$-th entry is $\tilde{\varepsilon}_k$ if $k \in \cA_i$ and 0 otherwise. Define
\[
\bv_i := \bbE(\tilde{\cE}) \in \bbR^q,
\quad\quad
\textnormal{and~note~that~}
\bbE(\tilde{\cE}) = \bbE(\tilde{\cE} \mid I_C = j).
\]
%
% To focus on the nodes where we are not intervening on, we construct the following matrix 
% \[
% \mathbb{I}_{\overline{\cA_i}} \in \mathbb{R}^{q \times q} \text{ s.t. } (\mathbb{I}_{\overline{\cA_i}})_{kl} = \begin{cases}
%     1 \text{ if } k = l \text{ and } k \notin \cA_i\\
%     0 \text{ otherwise}
%     \end{cases}
% \]
Therefore, the variables $\bZ(i)$ after the intervention satisfy the following SEM
\[
\bZ(i) = B_i \bZ(i) + \tilde{\cE} + \bbI_{\overline{\cA_i}} \cE \implies \bZ(i) = (I - B_i)^{-1} \left(\tilde{\cE} + \bbI_{\overline{\cA_i}} \cE \right)
\]
where $\overline{\cA_i} = [q] \setminus \cA_i$. Hence,
\[
\bbE(\bZ(i) \mid I_C= j) =  (I - B_i)^{-1} \bbE(\tilde{\cE}) + (I - B_i)^{-1} \bbI_{\overline{\cA_i}} \bbE(\cE \mid I_C= j)
\]
Recall that we are interested only in $\bZ_\cX$ (the variables we observe).
Let
\[
U_i = \left[(I - B_i)^{-1} \right]_\cX \in \bbR^{|\cX| \times q} \quad \quad U'_i = U_i \bbI_{\overline{\cA_i}} \in \bbR^{|\cX| \times q}
\]
Therefore, we have
\[
\bbE(\bZ_\cX(i) \mid I_C = j) = U_i \bv_i + U_i' \bw_j
\]
\end{proof}

\newtheorem*{P1}{Proposition~\ref{prop:lineargaussian}}

\begin{P1}
Let Assumptions \ref{assumption:linear} and \ref{assumption:context_gaussian} hold, and assume that the underlying SCM is Gaussian.
Then the the latent factor model from \rref{thm:linear} is
\[
U_i'\bw_j = U_i'W \bc_j
\]
where W is a constant matrix and $\bc_j$ is the value that context markers take to define context $j$. Therefore, by defining $U''_i = U'_i W \in \bbR^{|\cX| \times |\cC|}$ we obtain a latent factor model of rank $|\cC|$.
\end{P1}

\begin{proof}
We have $\mathcal{E} = (\varepsilon_1, \dots, \varepsilon_q) \sim \mathcal{N}(\mu_\varepsilon ={\bf 0}, \Sigma_\varepsilon = \text{diag}(\sigma_i^2))$, so 
\[
\Sigma := \text{Cov} ({\bf Z}) = (I - B)^{-1} \Sigma_\varepsilon (I - B^{\top})^{-1}
\]
Let $\bZ_\cC$ be the context markers variables and $\overline{\cC} = [q] \setminus \cC$. Using the Schur Complement we obtain
\[
\mathbb{E}({\bf Z}_{\overline{\cC}} \mid \bZ_\cC = \bc_j) = \Sigma_{\overline{\cC}\cC} \Sigma_{\cC \cC}^{-1} \bc_j
\]
Of course, we also have $\mathbb{E}({\bf Z}_{\cC} \mid {\bf Z}_{\cC} = {\bf c}_j) = {\bf c}_j $, so $\mathbb{E}({\bf Z} \mid {\bf Z}_\cC = {\bf c}_j) = M{\bf c}_j$ for some matrix $M$. %It depends only on the weights of the underlying SEM (and the $\{ \sigma_i \}$).
One way to be more specific about this $M$ is the following: Let $P$ be the permutation matrix such that $P {\bf Z} = ({\bf Z}_{\overline{\cC}}, {\bf Z}_{\cC})^{\top}$. Then we have
\begin{equation*}
    \mathbb{E}(P{\bf Z} \mid {\bf Z}_{\cC} = {\bf c}_j) = \begin{pmatrix}
    \Sigma_{\overline{\cC}\cC} \Sigma_{\cC\cC}^{-1} \\
    I_{|\cC|}
    \end{pmatrix}
    {\bf c}_j
    \implies
    M = P^{-1} \begin{pmatrix}
    \Sigma_{\overline{\cC}\cC} \Sigma_{\cC\cC}^{-1} \\
    I_{|\cC|}
    \end{pmatrix}
\end{equation*}
Hence, after conditioning, the expected value of the noise is
\begin{equation*}
     \mathbb{E}(\mathcal{E} \mid {\bf Z}_{\cC} = {\bf c}_j) = (I-B)  \mathbb{E}({\bf Z} \mid {\bf Z}_{\cC} = {\bf c}_j) = (I-B)M{\bf c}_j
\end{equation*}
Therefore, we can define 
\[
W = (I-B)M = (I-B) P^{-1} \begin{pmatrix}
    \Sigma_{\overline{\cC}\cC} \Sigma_{\cC \cC}^{-1} \\
    I_{|\cC|}
    \end{pmatrix} \in \bbR^{q \times |\cC|}
\]
Note that $W$ depends on the nodes we are conditioning on, but not on their values. The rest of the proof is completely analogous to the previous one.
\end{proof}

\begin{remark}
Although it is common to work with $\mu_\varepsilon = {\bf 0}$, there is no need to assume that. In general, again using the Schur complement, we would have (let $\mu = \mu_\varepsilon$)
\begin{align*}
    \mathbb{E}({\bf Z}_{\overline{\cC}} \mid {\bf Z}_{\cC} = \bc_j) &= \Sigma_{\overline{\cC}\cC} \Sigma_{\cC\cC}^{-1}({\bf c}_j - \mu_\cC) + \mu_{\overline{\cC}} = \Sigma_{\overline{\cC}\cC} \Sigma_{\cC\cC}^{-1}{\bf c}_j - \Sigma_{\overline{\cC}\cC} \Sigma_{\cC\cC}^{-1}\mu_\cC + \mu_{\overline{\cC}} \\
    &= \left(\begin{array}{@{}c|c@{}}
          \begin{matrix}
            \mu_{\overline{\cC}} - \Sigma_{\overline{\cC}\cC} \Sigma_{\cC\cC}^{-1}\mu_\cC\\
          \end{matrix}
          & 
          \begin{matrix}
           \Sigma_{\overline{\cC}\cC} \Sigma_{\cC\cC}^{-1}\\
          \end{matrix}
        \end{array}\right)
        \onevec{\bc_j}
\end{align*}
So, if we define $\bc'_j = \onevec{\bc_j} \in \bbR^{1 + |\cC|}$ and $M$ accordingly, all the proof works analogously.
\end{remark}

% \begin{proof}{\rref{cor:fe}}

% If $\cA_i = \cA \; \forall i$, then $U_i = U \; \forall i$.
% \end{proof}
\section{Best approach for estimating a one-way fixed effect} \label{appendix:oneFE}

In \rref{sec:theory}, we claimed that the best approximation of a one-way fixed effect is given by the
mean-over-contexts estimator. Indeed, consider that following problem
\begin{equation} \label{problem:min_moc}
 \min_{\Gamma \in \bbR^m} f(\Gamma):= \|\bY - \Gamma \mathbf{1}^\top \|_F^2   
\end{equation}
where $\bY \in \bbR^{m \times n}$, $\mathbf{1} = (1, \dots, 1) \in \bbR^n$, and $\| \cdot \|_F$ denotes the Frobenius norm of a matrix. By the linearity of trace we have
\[
f(\Gamma) = \tr(\bY^\top \bY) - 2 \tr(\mathbf{1} \Gamma^\top \bY) + \tr(\mathbf{1} \Gamma^\top \Gamma \mathbf{1})
\]
Using the cyclic property of the trace and taking the gradient of $f$ we get
\[
\nabla f (\Gamma) = -2 \bY \mathbf{1} + 2n \Gamma = \mathbf{0} \iff \Gamma = \frac{1}{n}\bY \mathbf{1}
\]
That is, $\Gamma_i = \frac{1}{n}\sum_j Y_{ij}$, which corresponds to the mean-over-contexts estimator. Since $f$ is strictly convex, this is the solution to \rref{problem:min_moc}.
\section{SI is non-symmetric in the tensor case} \label{appendix:SItensor}

In \rref{sec:theory}, we argued that SI is non-symmetric in the tensor case. 
The following example shows that this is true. 
In particular, completing within columns recovers the desired outcome while completing within rows does not. 
The key idea is that we have to flatten the tensor in one direction or another, and depending on the direction, linear span inclusion does or does not hold. Consider the following 3-order tensor
\[
\resizebox{\hsize}{!}{$
\bY =
\left[
\begin{array}{cccc||cccc}
Y_{111} & Y_{121} & Y_{131} & Y_{141} & Y_{112} & Y_{122} & Y_{132} & Y_{142} \\
Y_{211} & Y_{221} & Y_{231} & Y_{241} & Y_{212} & Y_{222} & Y_{232} & Y_{242} \\
Y_{311} & Y_{321} & Y_{331} & Y_{341} & Y_{312} & Y_{322} & Y_{332} & Y_{342} \\
Y_{411} & Y_{421} & Y_{431} & Y_{441} & Y_{412} & Y_{422} & Y_{432} & Y_{442} \\
\end{array}
\right] 
= \left[
\begin{array}{cccc||cccc}
1 & 1 & 1 & 1 & 0 & 2 & 1 & 1 \\
1 & 0 & 1 & 0 & 1 & 1 & 1 & 1 \\
1 & 1 & 1 & 1 & 1 & 1 & 1 & 1 \\
0 & 0 & 0 & 0 & 1 & 1 & 1 & 1 \\
\end{array}
\right]$}
\]
For the $(i.j)$-th entry, we use the notation $\bY_{ij} = (Y_{ij1}, Y_{ij2})\in \bbR^2$. Suppose we have observed all the matrix except for the $(4,4)$-th entry. 
By regressing along rows, we are predicting the (flattened) first three rows of the 4th column $(1, 0, 1, 1, 1, 1)$ from the (flattened) first three rows of the 1st, 2nd and 3rd columns $(1, 1, 1, 0, 1, 1)$, etc.
Thus, we obtain the following system of equations:
\[
\begin{pmatrix}
1 & 1 & 1\\
1 & 0 & 1\\
1 & 1 & 1\\ 
0 & 2 & 1\\
1 & 1 & 1\\
1 & 1 & 1\\
\end{pmatrix}
\beta = 
\begin{pmatrix}
1\\
0\\
1\\
1\\
1\\
1\\
\end{pmatrix}
\]
This linear system is independent and we obtain $\hat{\beta} = (1,1,-1)$. Thus, our prediction for the $(4,4)$-th entry is
\[
\hat{\bY}_{44} = 
\begin{pmatrix}
0 & 0 & 0\\
1 & 1 & 1\\
\end{pmatrix}
\hat{\beta} = 
\begin{pmatrix}
0\\
1\\
\end{pmatrix}
= \bY_{44}
\]
Thus, regressing along rows works successfully. However, by regressing along columns we would obtain the following:
\[
\begin{pmatrix}
1 & 1 & 1\\
1 & 0 & 1\\
1 & 1 & 1\\ 
0 & 1 & 1\\
2 & 1 & 1\\
1 & 1 & 1\\
\end{pmatrix}
\alpha = 
\begin{pmatrix}
0\\
0\\
0\\
1\\
1\\
1\\
\end{pmatrix}
\]
which is an inconsistent system. The least-squares solution is $\hat{\alpha} = (0,0.6,0)$, so the prediction would be
\[
\hat{\bY}_{44} =
\begin{pmatrix}
1 & 0 & 1\\
1 & 1 & 1\\
\end{pmatrix}
\hat{\alpha} = 
\begin{pmatrix}
0\\
0.6\\
\end{pmatrix}
\neq \bY_{44}
\]
So this shows how SI works only in one direction. Moreover, this example can be related to our causal setup. Let $Z_1 \to Z_2$ follow the SCM
\[
\begin{cases}
    Z_1 = \varepsilon_1 \\
    Z_2 = Z_1 + \varepsilon_2
\end{cases}
\]
where $\varepsilon_1, \varepsilon_2 \sim \cN(0,1)$ are independent noise variables. Suppose that we are observing both variables. Each column of $\bY$ corresponds to conditioning on $\bZ = (c_1, c_2)$. In particular, we have $\bbE \left[ (\varepsilon_1, \varepsilon_2) \mid (Z_1, Z_2) = (c_1, c_2) \right] = (c_1, c_2 - c_1)$. The rows of $\bY$ correspond to an intervention $\DO(Z_1 = a_1, Z_2 = a_2)$. The values of $(c_1, c_2)$ and $(a_1, a_2)$ are defined in \rref{tab:tensorexample},
\begin{table}[h!]
    \centering
    \begin{tabular}{c|cc}
        column $(I_C)$ & $c_1$ & $c_2$ \\
        \hline
        $j = 1$ & 1 & 0\\
        $j = 2$ & 0 & 1\\
        $j = 3$ & 1 & 1\\
        $j = 4$ & 0 & 0\\
    \end{tabular}
    \quad \quad
    \begin{tabular}{c|cc}
        row $(I_A)$ & $a_1$ & $a_2$ \\
        \hline
        $i = 1$ & 1 & -\\
        $i = 2$ & - & 1\\
        $i = 3$ & 1 & 1\\
        $i = 4$ & 0 & 1\\
    \end{tabular}
    \vspace{.2cm}
    \caption{Values used for defining $i$-th row and $j$-th column of $\bY$.}
    \label{tab:tensorexample}
\end{table}
where "-" denotes that we are not intervening on that variable. It is easy to check that $\bbE(\bZ(i) \mid I_C = j) = \bY_{ij}$. Therefore, our matrix comes from a linear causal model. To see why SI only worked on one direction, recall that \rref{eq:factorsi} tells us that we can write
\[
\bY_{ij} = P_i \bq_j
\]
for some $P_i \in \bbR^{2 \times 3}$ and $\bq_j = \bbR^{3}$. Following the Proof of \rref{thm:linear}, we can construct these factors:
\begin{align*}
    &P_1 = \begin{pmatrix}
        1 & 0 & 0\\
        1 & 0 & 1\\
    \end{pmatrix}
    &P_2 = \begin{pmatrix}
        0 & 1 & 0\\
        1 & 0 & 0\\
    \end{pmatrix} \quad \quad
    &P_3 = \begin{pmatrix}
        1 & 0 & 0\\
        1 & 0 & 0\\
    \end{pmatrix}
    &P_4 = \begin{pmatrix}
        0 & 0 & 0\\
        1 & 0 & 0\\
    \end{pmatrix} \\[0.1cm]
    &\bq_1 = \begin{pmatrix}
        1 & 1 & -1\\
    \end{pmatrix}
    &\bq_2 = \begin{pmatrix}
        1 & 0 & 1\\
    \end{pmatrix} \quad \quad
    &\bq_3 = \begin{pmatrix}
        1 & 1 & 0\\
    \end{pmatrix}
    &\bq_4 = \begin{pmatrix}
        1 & 0 & 0\\
    \end{pmatrix}
\end{align*}
Note that $\bq_4 \in \operatorname{span}(\bq_1, \bq_2, \bq_3)$. In fact, $\bq_4 = \hat{\beta}_1 \bq_1 + \hat{\beta}_2 \bq_2 + \hat{\beta}_3 \bq_3$, so the linear span inclusion is satisfied along columns. Nevertheless, $P_4 \notin \operatorname{span}(P_1, P_2, P_3)$, so the linear span inclusion is not satisfied along rows.

\section{Polynomial case} \label{appendix:polynomial} %(polynomial tensor completion) 

In \rref{sec:theory}, we discussed the possibility of extending our results to non-linear SCMs. Here, we show an example where the latent variables follow a linear SCM and there is a polynomial link function mapping the latent variables to the observed ones. For simiplicity in notation we use $X = \bZ_\cX$, $A = \bZ_\cA$, and $C = \bZ_\cC$. Note that $\cA$ and $\cC$ are the same for all rows and columns, respectively.

\begin{example} \label{poly_example}
Consider the following SCM:

\begin{equation*}
\begin{cases}
    A = \varepsilon_A \\
    C = B_{AC}A + \varepsilon_C\\
    X = f(A,C) + \varepsilon_X, \quad f \in \mathbb{R}_d[Y_1,Y_2]
\end{cases}
\end{equation*}
Since there is linear relation between $A$ and $C$, we can still use the Schur Complement. For a gaussian variable $Z\sim \mathcal{N}(\mu, \sigma^2)$, we have $\mathbb{E}(Z^k) = p_k(\mu, \sigma)$ for some polynomial $p_k \in \mathbb{R}_k[T_1, T_2]$. Therefore, $\mathbb{E}(X(a_i) | C = c_j)$ is a polynomial in $a_i$ and $c_j$. In particular, we can write it as

%In our case, $\varepsilon_C | C = c \sim \mathcal{N}(\frac{c}{2}, \frac{1}{2})$, so we can write 
\begin{align*}
    \mathbb{E}(X(a_i) | C = c_j) &= q_0(c_j) + q_1(c_j)a_i + \cdots + q_{d-1}(c_j) a_i^{d-1}  + q_d(c_j) a_i^d\\
    & = \begin{pmatrix}
    1 & a_i & \cdots & a_i^d
    \end{pmatrix}
    \begin{pmatrix}
    q_0(c_j) \\
    q_1(c_j)\\
    \vdots \\
    q_d(c_j)
    \end{pmatrix}
\end{align*}
for some polynomials $q_k$.\footnote{In this case, we could defined this polynomial more explicitly as a function of $\bbE(\varepsilon_c^k \mid C = c_j)$, but this is not necessary for expressing the factor model.} Therefore, for a polynomial of degree $d$ we have a factor model of degree $d+1$, so we could also use Synthetic Interventions in this case.
\end{example}
If we had multiple fixed action nodes $A_1, \dots, A_r$ and a polynomial of degree $d$ from the latent nodes to $X$, we could use the same idea and express $\mathbb{E}(X(a_i) \mid C = c_j)$ as a polynomial on $a_i^1, \dots, a_i^r$. 
The number of monomials in $r$ variables of degree no greater than $d$ is $\binom{r + d}{d}$, so that would be the rank of our factor model. 
However, it is not clear how our decomposition would look like if allowed intervening on different nodes, because the $q_k$'s would depend on these indices. We leave this question open as a future direction.

\subsection{Going beyond polynomials}
\begin{example} \label{exponential}
Consider now the following SCM:
\begin{equation*}
\begin{cases}
    A = \varepsilon_A \\
    C = A + \varepsilon_C\\
    X = \exp{(AC)}
\end{cases}
\end{equation*}
%Note that the corresponding DAG is the same one as in Example \ref{ex3}, see Figure \ref{fig:graphs3}. However, the results are going to be completely different.
After conditioning and intervening, we get that
\[
X(a) | C = c \sim \text{Lognormal}\left(a \left(a + \frac{c}{2} \right), \frac{a^2}{2}\right)
\]
Hence,
\begin{equation*}
    \mathbb{E}(X(a) | C = c) = \exp\left(a \left(a + \frac{c}{2} \right) + \frac{a^2}{4} \right) = \exp \left(\frac{5}{4}a^2 \right) \exp \left(\frac{ac}{2}\right)
\end{equation*}
But this expression cannot be expressed as a low-rank factor model.
\end{example}
Example \ref{exponential} suggests that it might be difficult to obtain low-rank factor models for SCMs that involve functions "more complicated" than polynomials. Moreover, Example \ref{poly_example} shows that the rank of the factor model increases as the degree of the polynomial increases.

Note that the expression obtained is the exponential of a polynomial in $a, c$. If we applied an entry-wise logarithm to our matrix, we could apply the SI estimator and then take the exponential of the outcome observed. With this idea in mind, we could think about developing SI theory for feature spaces. However, this goes beyond the scope of this work and we leave it for future discussion.

%so we could apply the SI estimator to $\log({\bf X})$ (entry-wise logarithm) and then take the exponential of the outcome obtained.

%Maybe we can relate this somehow with performing the SI estimator in a latent space. At the end, if we are doing a linear regression in a feature space, there has to be some connections with kernel methods. This is a very particular case of feature map, since we are acting on every single entry separately, so at the end we end up with the same number of dimension. However, in the tensor case we could consider more general feature maps.
\section{Hypothesis tests} \label{appendix:hypothesis_tests}

In \rref{sec:theory}, we discussed the necessity to test the validity of our assumptions. Here, we show how our causal modeling may also be useful for constructing hypothesis tests. For simplicity, assume that we have observed $n_s$ independent samples for each observed entry, i.e. for each $(i,j) \in \Omega$. Assume that $Y_{ij} \sim \mathcal{N}(\mu_{ij}, \sigma_{ij}^2)$  and let $\overline{Y}_{ij} \sim \mathcal{N}(\mu_{i,j}, \frac{\sigma_{ij}^2}{n_s})$ denote the average of these samples. We can estimate the sample variance as follows $S_{ij}^2 = \frac{1}{n_s-1}\sum_{k=1}^{n_s}(Y_{ij}^{(k)} - \overline{Y}_{ij})^2$.

%It would be simply adapting the Fixed Effect case
%FE, MOA, MOC is easy. Our graph gives us some guarantees on the noise.

\subsection{Homoscedasticity within rows/columns}

Before discussing the different decompositions obtained in \rref{sec:theory}, is it worth noting that the SCMs we studied give rise to certain structure in the variance of our matrix. In particular, \rref{prop:lineargaussian} leads to homoscedasticity within the entries of the same row ($\sigma_{i,j} = \sigma_{i,j'}$). Indeed, looking at the Schur Complement, the variance of the noise after conditioning on some variables depends on the variables we condition on, but not on the actual values they take. On the other hand, in \rref{cor:fe} we have the same SCMs after the interventions for different rows, which implies homoscedasticity within columns ($\sigma_{i,j} = \sigma_{i',j}$).

This lead to a classical test for comparing the variances of two independent samples. For example, for testing homoscedasticity within column $j$ we consider

\begin{equation*}
    \begin{cases}
    H_0: \frac{\sigma_{i,j}^2}{\sigma_{i-1,j}^2} = 1 \quad \forall~ i\geq2,j \\
    H_1: \frac{\sigma_{i,j}^2}{\sigma_{i-1,j}^2} \neq 1, \quad \exists~ i,j \\
    \end{cases}
\end{equation*}
So we can use the following estimator
\[
F = \frac{S_{i,j}^2}{S_{i-1,j}^2}
\]
and we have $F \mid H_0 \sim F_{n_s - 1, n_s - 1}$. A test for homoscedasticity within row $i$ would be analogous.

\subsection{Test for fixed effects}

Suppose that we are in the case of Corollary \ref{cor:fe}. How can we test if FE is appropriate for our matrix? For simplicity in notation, suppose that we have observed all the entries of $ \bY \in \bbR^{m\times n}$. For a partially observed entries, we would use these tests for the observed entries. With this notation, all the matrices with a two-way fixed effect are defined by the following equations.
\begin{equation}\label{eq:hypothesis}
    Y_{ij} + Y_{i'j'} - Y_{i'j} - Y_{ij'} = 0 \quad \forall~ i \neq i', j \neq j'
\end{equation}
Many of this equations are linearly dependent. In fact, all of them can be expressed as a linear combination of equations
\begin{equation*} \label{eq:hypo1}
    E_{ij} : Y_{11} + Y_{ij} - Y_{i1} - Y_{1j} = 0, \quad i = 2, \dots, m, j = 2, \dots n\
\end{equation*}
Indeed, by computing $E_{ij} + E_{i'j'} - E_{i'j} - E_{ij'}$ we get \rref{eq:hypothesis}. So we only need to check $(m-1)(n-1)$ equations instead of $m(m-1)n(n-1)$. 
However, one drawback about using \rref{eq:hypo1} for constructing hypothesis tests is that they heavily rely on the (1,1)-th entry. So we can consider a more robust set of equations:
\begin{equation*}
    \Tilde{E}_{ij} : Y_{i-1, j-1} + Y_{i,j} - Y_{i,j-1} - Y_{i-1,j} = 0, \quad i = 2, \dots, m, j = 2, \dots ,n
\end{equation*}
Let $\mu_{i,j} = \bbE(Y_{i,j})$ , the hypothesis test for FE is the following.
\begin{equation*}
    \begin{cases}
    H_0: \mu_{i-1,j-1} + \mu_{i,j} - \mu_{i,j-1} - \mu_{i-1,j} = 0, \quad \forall~ i,j \\
    H_1: \mu_{i-1,j-1} + \mu_{i,j} - \mu_{i,j-1} - \mu_{i-1,j} \neq 0, \quad \exists~ i,j \\
    \end{cases}
\end{equation*}
Therefore, we can consider the following statistic:
\[
\hat{T}_{ij} = \frac{\overline{Y}_{i-1,i-1} + \overline{Y}_{i,j} - \overline{Y}_{i,j-1} - \overline{Y}_{i-1,j}}{\sqrt{\frac{1}{n_s}\left(S_{i-1,j-1}^2 + S_{i,j}^2 + S_{i,j-1}^2 + S_{i-1,j}^2 \right)}}
\]
The key idea is that, in \rref{cor:fe}, we have homoscedasticity within rows. This makes our test similar to a Welch's test, and we can approximate the distribution of $\hat{T}_{i,j} \mid H_0$ as $t_{\hat{\nu}}$ where

\[
\hat{\nu} = (n_s - 1)\frac{\left( (S_{i,j}^2 + S_{i-1,j}^2) + (S_{i,j-1}^2 + S_{i-1,j-1}^2) \right)^2}{(S_{i,j}^2 + S_{i-1,j}^2)^2 + (S_{i,j-1}^2 + S_{i-1,j-1}^2)^2}
\]

\paragraph{Multiple comparisons problem.} We have $N = (n-1)(m-1)$ different null hypothesis, corresponding to each $(i,j)$ pair for $i,j \geq 2$. If we want to achieve a familywise error rate (FWER) of $\alpha$, we can reject a null hypothesis if we get a p-value lower than $\frac{\alpha}{N}$ (Bonferroni correction), or $1 - (1-\alpha)^N$ (Šidák correction).

\paragraph{Empirical Results.} In \rref{fig:ROC} we show the how this estimator is useful for testing a fixed-effect behaviour. In particular, for each plot, we simulate 200 matrices satisfying the assumptions from \rref{cor:fe} and 200 matrices satisfying the assumptions from \rref{prop:lineargaussian}. More specifically, we use the \texttt{causaldag} Python library to generate a DAG with random weights (we use the default parameters). We fix the node we observe ($\bZ_\cX$) and the context variables $\bZ_\cC$. For each simulated matrix, we condition on a different set of values ($\bz_\cC$), sampled from a uniform distribution over $[0, 10]$. The nodes where we intervene on change for each matrix. We perform a $\DO$-intervention on each node with probability $0.2$ and the values we used are also sampled from a uniform distribution over $[0, 10]$. For the matrices where the fixed effect behavior is not satisfied, we do not intervene on the same nodes for all the matrix, so we follow the procedure mentioned before for each row.

We observe how, as the size of the matrix and the number of observed samples increase, the statistic we propose can fully distinguish the two types of matrices.

\begin{figure}[ht]
    \centering
    \includegraphics[width = .7\textwidth]{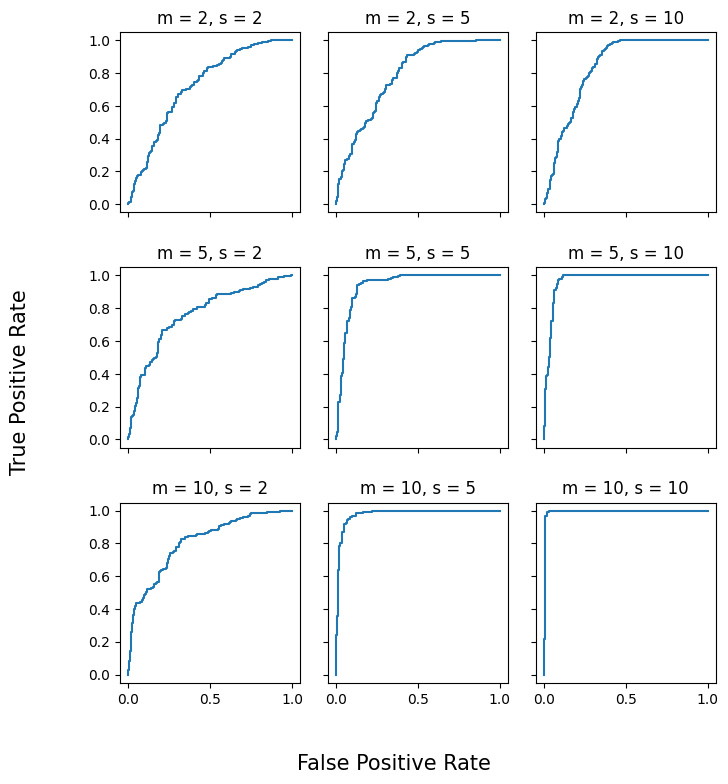}
    \caption{\textbf{ROC curves}. \textit{m} denotes the size of the matrix, i.e. $\bY \in \bbR^{m \times m}$. \textit{s} denotes the number of samples observed per each entry.}
    \label{fig:ROC}
\end{figure}

\subsection{Test for mean-over-contexts}
To be precise, before testing whether or not we our matrix follows a two-way fixed effect model, we should test whether it has only a one-way fixed effect. That is, test if we can use mean over actions or contexts. For example, to test whether we only have a fixed effect along rows, instead of Equation \ref{eq:hypothesis} we would have
\[
Y_{ij} - Y_{ij'} = 0 \quad \forall~ i, j \neq j'
\]
And we can repeat all the reasoning from the previous section to obtain an appropriate estimator. The case of a fixed effect along columns would be completely analogous.

\subsection{Test for Synthetic Interventions}
\cite{agarwalSI} propose a hypothesis test for the SI estimator, so we could use it for our particular case to check the linear span inclusion assumption. %However, this estimator would be useful when rejecting the null hypothesis, but not so much if we accepted it. The idea is that SI works both for the linear and polynomial cases we considered, so we should have a test that distinguishes these two models.
Moreover, as mentioned before, assuming Gaussianity as in \rref{prop:lineargaussian} we can also test homoscedasticity within rows. Therefore, constructing such tests is an interesting future direction.

\newpage
\section{Further analysis on PRISM Repurposing dataset}

\subsection{SVD analysis} \label{appendix:SVDanalysis}

In \rref{sec:experiments}, we argued that getting high $R^2$ was difficult because the PRISM data is relatively noisy. Here we analyze the Singular Value Decomposition (SVD) of our matrix $\bY$ shown Figure \ref{fig:viability-and-cell-diagram}. It is important to mention that here we are showing the SVD computed on all the matrix. That is, we are also using the entries that were supposed to be missing and that we want to impute as a matrix completion task. Therefore, the results shown in this section are merely an analysis of our data, but we are not going to use them for the imputation task.

In Figure \ref{fig:svalues}, we show the Singular Values of $\bY$. We can see that they are relatively diffuse, suggesting that there is a considerably amount of noise in the data. For example, the explained variance by the top $10$ SVD vectors $\left( \frac{\sum_{i \leq 10} \sigma_i^2}{\sum_{i} \sigma_i^2}\right)$
is just $78\%$.

The Eckart-Young theorem \citep{eckartyoung} states that the best rank $r$ approximation to a matrix is given by truncating its SVD to the top $r$ singular values. We can use this result to establish upper bounds in performance that we should expect from matrix completion approaches. For instance, mean-over-contexts produces a rank 1 approximation of our matrix, so this approximation must be worse (i.e. smaller $R^2$) than the rank 1 truncated SVD.

In Figure \ref{fig:svalues}, we also show the $R^2$ values for the truncated SVD using the missing data pattern like the one from Figure \ref{fig:missingsquares}, with $25\%$ of missing entries. It is important to remark that in this case we are computing the SVD using all the matrix $\bY$ (including missing entries), so this is not a valid completion approach. It should be understood as the optimal performance we can expect.

Generally, it is helpful to see how the first singular values pairs look like. Let $\bS_1$ be rank 1 truncated SVD of $\bY$. In \rref{fig:svd} we observe that $\bS_1$  has almost constant rows, so it seems to be capturing the fixed effect in that direction. This indicates that the fixed effect along rows is much more important than the fixed effect along columns. Therefore, we may expect that the mean-over-contexts is going to be a considerably good baseline. We also see how in $\bY-\bS_1$  we lose a big part of this fixed effect along rows, and our matrix seems to have less variability. This suggests that subtracting the mean-over-contexts may lead us to better predictions (see \rref{sec:theory}). Finally, SVD2 also seems to capture some of this fixed effect behavior, but in a smaller order of magnitude.

\begin{figure}[ht]
    \centering
    \includegraphics[width=.4\linewidth]{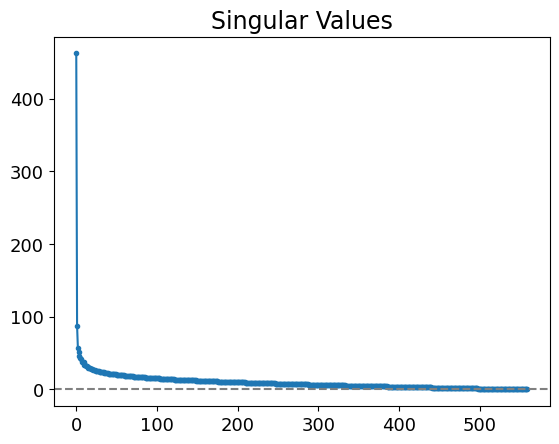}
    \quad
    \includegraphics[width=.4\linewidth]{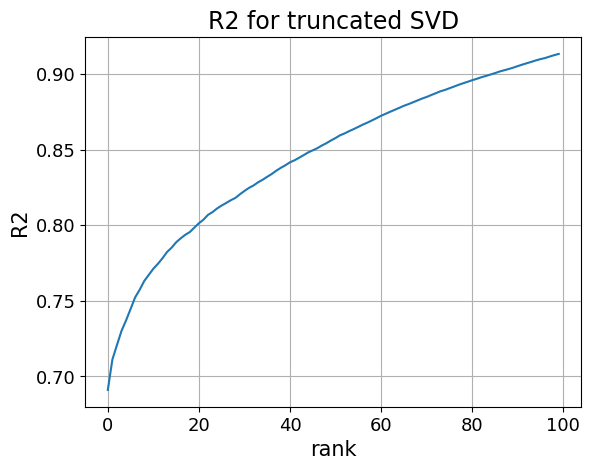}
    
    \caption{\textbf{Difficulty in getting high $R^2$ values}. The singular values of $\bY$ present a relatively heavy tail. This makes it difficult to obtain good metrics in our predictions. The truncated SVD of rank $r$ is the best rank $r$ approximation of a matrix. Therefore, the $R^2$ obtained by the truncated SVD may be considered as an upper bound for our completion algorithms. We should not expect to achieve an $R^2$ of 0.90 as this would involve finding the best approximation of rank $80$.
    %Chandler: the spectrum of the DepMap matrix is relatively diffuse.
    }
    \label{fig:svalues}
\end{figure}

% \begin{figure}[ht]
%     \centering
%     \includegraphics[width=.4\linewidth]{Images/r2svd.png}
%     \caption{\textbf{Difficulty in getting high $R^2$ values.} The truncated SVD of rank $r$ is the best rank $r$ approximation of a matrix. Therefore, the $R^2$ obtained by the truncated SVD may be considered as an upper bound for our completion algorithms. We should not expect to achieve an $R^2$ of 0.90 as this would involve finding the best approximation of rank $80$.}
%     \label{fig:thresfoldR2}
% \end{figure}

\newpage

\begin{figure}[ht]
      \includegraphics[width=0.45\textwidth]{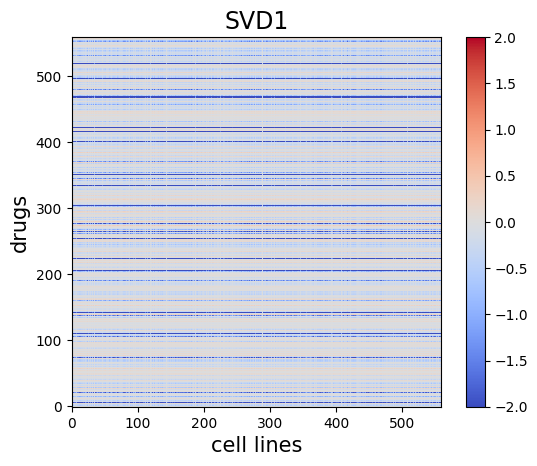}
        % \caption{Caption}
        \label{fig:svd1}
        \quad 
        \includegraphics[width=0.45\textwidth]{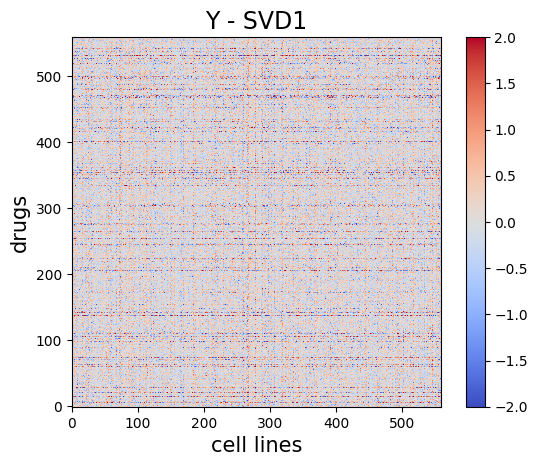}
        %\caption{Caption}

    \caption{\textbf{One-way fixed effect behavior}. The truncated SVD of rank $1$ (SVD1 = $\bS_1$) has approximately constant rows. This suggests that there is a fixed effect for actions so mean-over-contexts should be a relatively good estimator.}
    \label{fig:svd}
\end{figure}

%\afterpage{\clearpage}
\subsection{Missing pattern in CMAP dataset} \label{appendix:missingcmap}

As we mentioned in Section \ref{missingpattern}, in biology applications we usually do not have a missing at random pattern. In \rref{fig:missingcmap} we show the entries observed for the CMAP dataset. We use this example as a starting point for the missing data pattern considered in Section \ref{missingpattern}.

\begin{figure}[ht]
    \centering
    \includegraphics[width=0.4\linewidth]{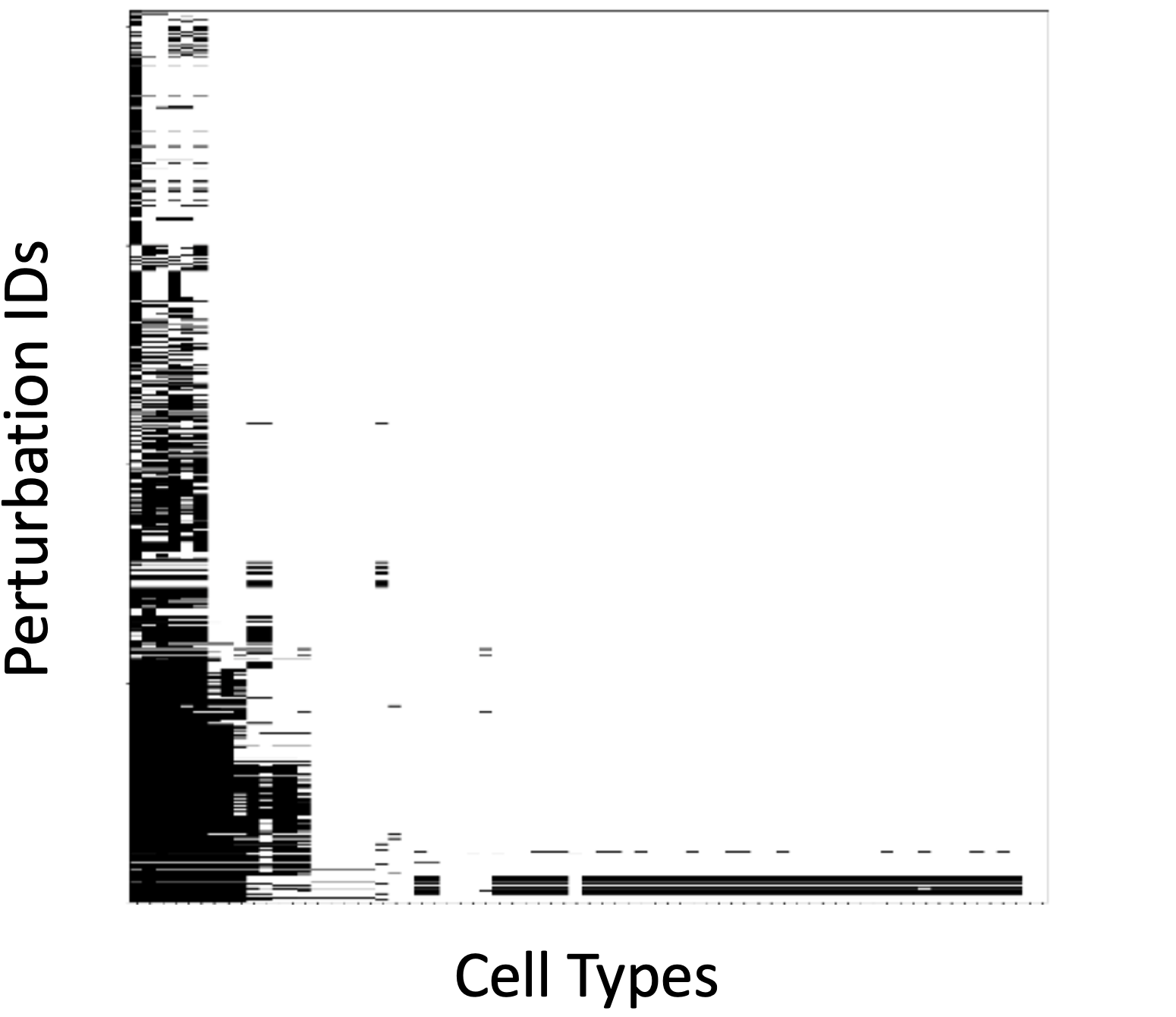}
    \caption{\textbf{Missing pattern NOT at random.} Availability matrix for CMAP dataset. A black rectangle means that we have observed that entry. Columns have been sorted from left to right according to the number of available entries (for each column). In a similar way, rows have been sorted from bottom to top. Figure obtained from \cite{squires2022causal}.}
    \label{fig:missingcmap}
\end{figure}

\subsection{An alternative missing data pattern}\label{appendix:curvedmissing}

The missing data patterns from Section \ref{missingpattern} may seem very artificial, as we have a constant number of observed rows and columns for each missing entry. In \rref{fig:missingcurve} we propose an alternative pattern, still following the motivation from the CMAP dataset (see \rref{appendix:missingcmap}), and we show the performance of different algorithms for this setting. The results are not significantly different to the ones from \rref{fig:boxplots}.

\begin{figure}[h!]
    \centering
    \includegraphics[width=0.38\linewidth]{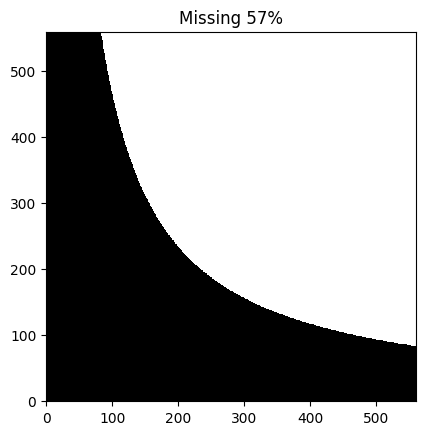}
    \quad
    \includegraphics[width=.4\linewidth]{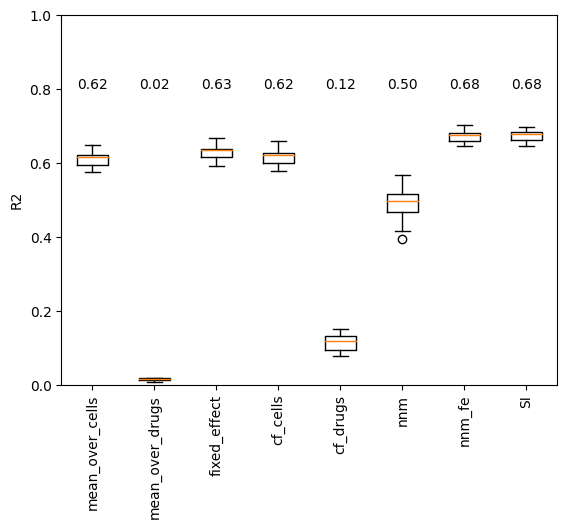}
    \caption{\textbf{Non-square missing data pattern.} Here, we do not have a constant value of rows and columns for each missing entry.}
    \label{fig:missingcurve}
\end{figure}

% \begin{figure}[h!]
%     \centering
%     \includegraphics[width=.5\linewidth]{Images/boxplots.png}
%     \caption{\textbf{$R^2$ values for missing data pattern from Figure \ref{fig:missingcurve}.}}
%     \label{fig:boxplotscurve}
% \end{figure}

% %Try NNM FE with our FE estimator, instead of Athey's. I already did that: pretty similar results

% \clearpage
\subsection{Performance of the algorithms depending on the amount of data observed} \label{appendix:increasingobs}

In \rref{sec:performance}, we showed the performance of different matrix completion approaches for a specific number of observed entries. The experiment we show here consists on varying the number of rows and columns observed per missing entry. This parameter will range from $5$ until $275$.
For each case, the square corresponding to the missing values has constant size $284 \times 284$. The reason for having constant size square (same number of missing entries) is that the MSE is computed on the same number of samples.

We have used the following bootstrap technique. For each case we shuffle the rows and columns of the full matrix and take a sub-matrix of our desired size (e.g. $289\times289$). Then, we take the corresponding missing data pattern and compute the $R^2$ for each algorithm. We repeat the same process for 20 different shuffles. See \rref{fig:increasingobs}. As a takeaway, it is worth noting that Synthetic Interventions benefits from subtracting the mean-over-contexts, especially in the low data regime.

Note that we can think about \rref{fig:boxplots} as taking slices from \rref{fig:increasingobs}. In particular, \rref{fig:boxplots} shows that Collaborative Filtering approach does not improve the baseline algorithms and Kernel Linear Regression is not significantly better than the standard Synthetic Interventions approach. For that reason, we do note include them in \rref{fig:increasingobs}.

\begin{figure}[ht]
\vspace*{-1cm}
    \centering
    \begin{subfigure}{}
        \centering
        \includegraphics[width=.4\textwidth]{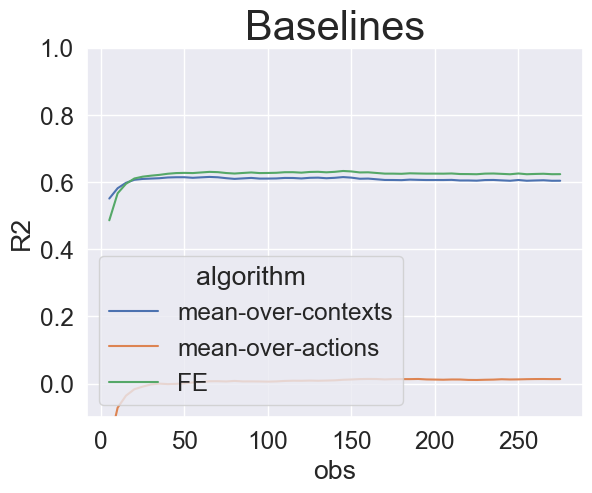}
        %\caption{Caption}
        %\label{fig:baselines}
    \end{subfigure}
    \begin{subfigure}{}
        \centering
        \includegraphics[width=.4\textwidth]{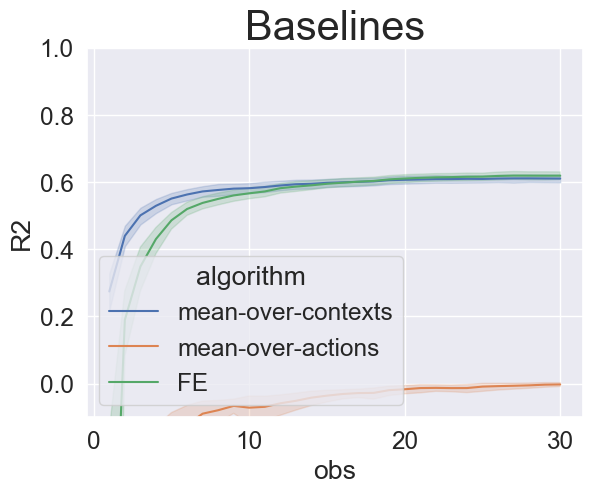}
        %\caption{Caption}
        %\label{fig:baselineslow}
    \end{subfigure}
    \begin{subfigure}{}
        \centering
        \includegraphics[width=.4\textwidth]{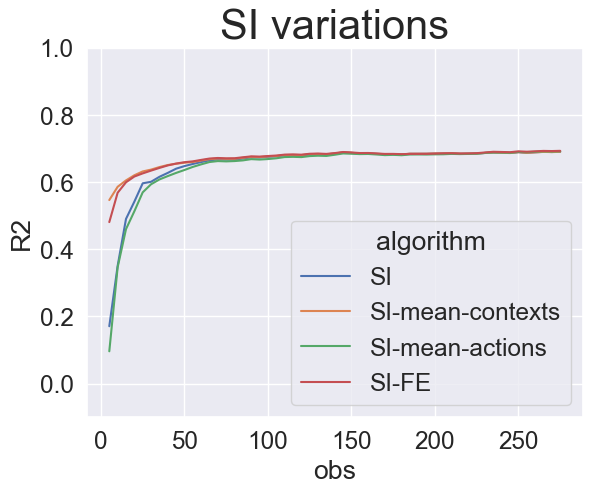}
        %\caption{Caption}
        %\label{fig:sivariations}
    \end{subfigure}
    \begin{subfigure}{}
        \centering
        \includegraphics[width=.4\textwidth]{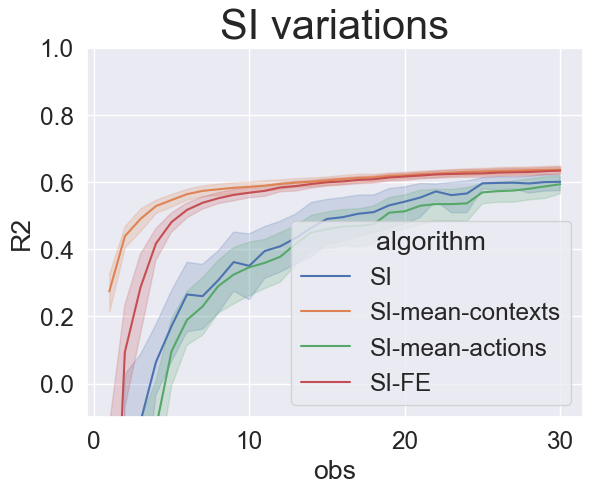}
        %\caption{Caption}
        %\label{fig:sivariationslow}
    \end{subfigure}
    \begin{subfigure}{}
        \centering
        \includegraphics[width=.4\textwidth]{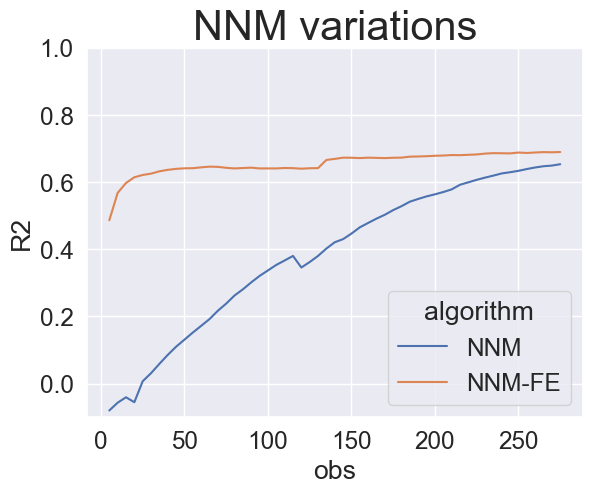}
        %\caption{Caption}
        %\label{fig:nnm}
    \end{subfigure}
    \begin{subfigure}{}
        \centering
        \includegraphics[width=.4\textwidth]{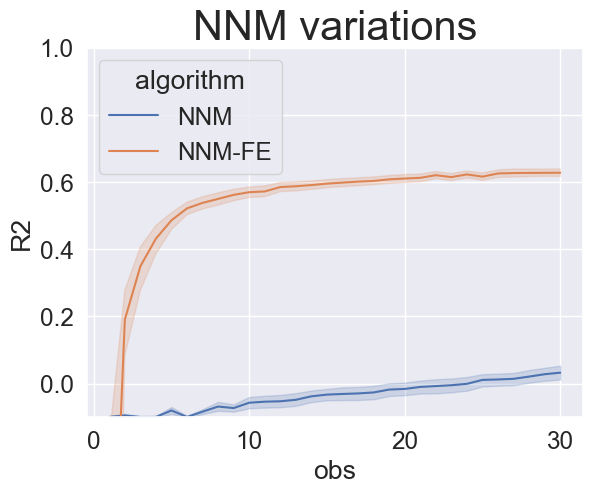}
        %\caption{Caption}
        %\label{fig:nnmlow}
    \end{subfigure}
    \begin{subfigure}{}
        \centering
        \includegraphics[width=.4\textwidth]{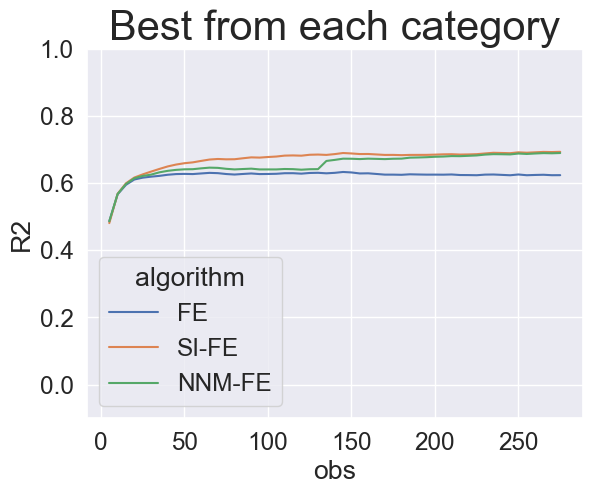}
        %\caption{Caption}
        %\label{fig:best}
    \end{subfigure}
    \begin{subfigure}{}
        \centering
        \includegraphics[width=.4\textwidth]{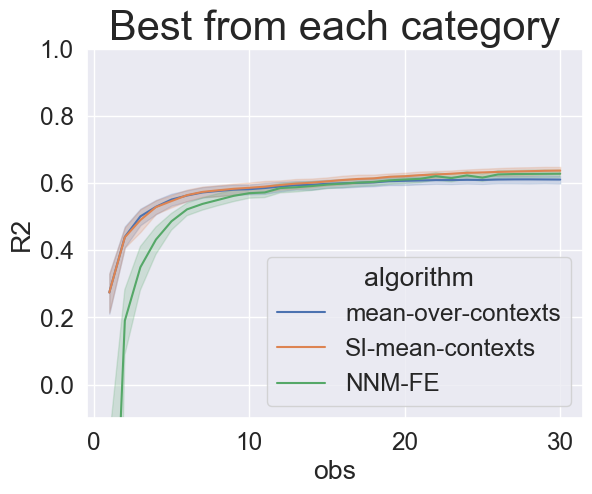}
        %\caption{Caption}
        %\label{fig:bestlow}
    \end{subfigure}
    \vspace*{-5mm}
    \caption{\textbf{Dependence on number of observations.}
    "obs" denotes the number of rows and columns observed for each missing entry. Figures from the right-hand side are focused on the low-data regime. We can see some spikes in the curves corresponding to NNM, these are produced by a change in the regularization parameter (e.g. from $10^{-3}$ to $10^{-4}$).}
    \label{fig:increasingobs}
\end{figure}

\clearpage
\subsection{Performance of all the matrix completion algorithms considered} \label{appendix:boxplotsall}

In \rref{sec:performance}, we showed the performance of some matrix completion approaches. Here, in \rref{fig:boxplots_all}, we include more approaches and some variations of the previous ones.

In particular, we see that the nearest neighbors approaches are not significantly better than the baselines. Doubly Robust NN is the method proposed in \cite{dwivedi2022doubly}. CF10 stands for a 10-nearest neighbor Collaborative Filtering approach.
Finally, it is worth noting that SI and its variations outperform the other approaches.

\begin{figure}[h!]
    \centering
    \includegraphics[width = \textwidth]{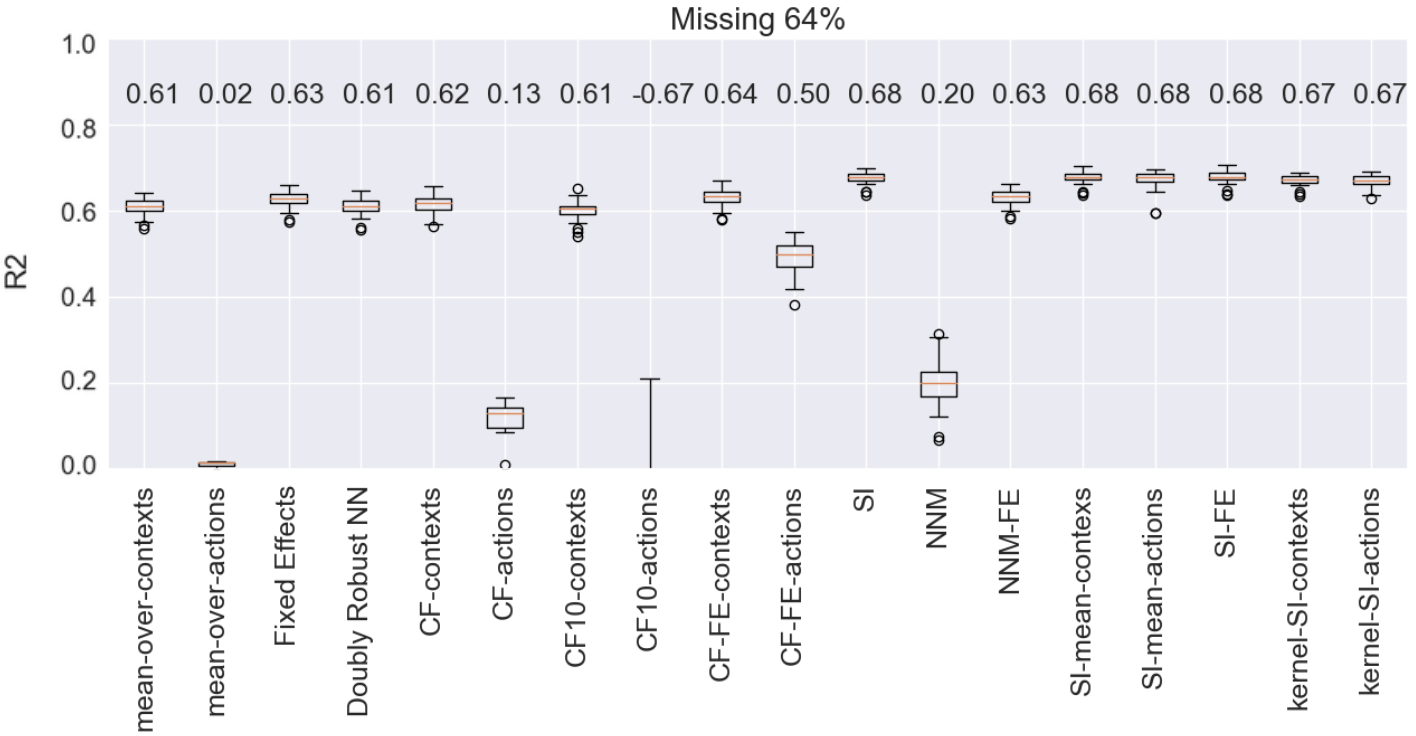}
    \caption{\textbf{All the matrix completion approaches considered.} Here we show the performance of all the algorithms considered, using the missing pattern from Figure \ref{fig:missingsquares}.}
    \label{fig:boxplots_all}
\end{figure}
\subsection{Removal of "killer drugs"} \label{appendix:killerlung}

In this section, we run the same experiments on certain portions of the whole matrix $\bY$.
%\subsection{Removal of "killer drugs"} \label{killer}
Looking back at our original matrix in \rref{fig:viability-and-cell-diagram}, we observe that there are some blue (negative) rows that seem to be constant along all the columns. This would mean that this particular drug is killing all the cell lines.

In light of this effect, we call a drug a "killer drug" if it kills more than a certain percentage of cell lines. It is not clear which threshold we should use. In \rref{fig:threshold}, we show the number of "killer drugs" depending on the threshold we define.

For the following results, we used a threshold of 80\%. The resulting matrix can be seen in \rref{fig:killerY}. Note that we have removed almost all the blue lines that we mentioned before. Now the fixed effect along rows does not seem that clear. In particular, the MSE of the baseline decreases significantly compared to the previous setup. This means that the task of completing the task is \textit{easier} for the baseline model. In consequence, getting high $R^2$ values is going to be much harder.

In \rref{fig:thresfoldr2killer} we repeat the same experiment from \rref{fig:svalues} for this new data. As expected, now the $R^2$ value we obtain are much lower. Finally, we show the performance of the different matrix completion approaches on this matrix in \rref{fig:boxplotskiller}.

\begin{figure}[ht]
    \centering
    \includegraphics[width=.4\linewidth]{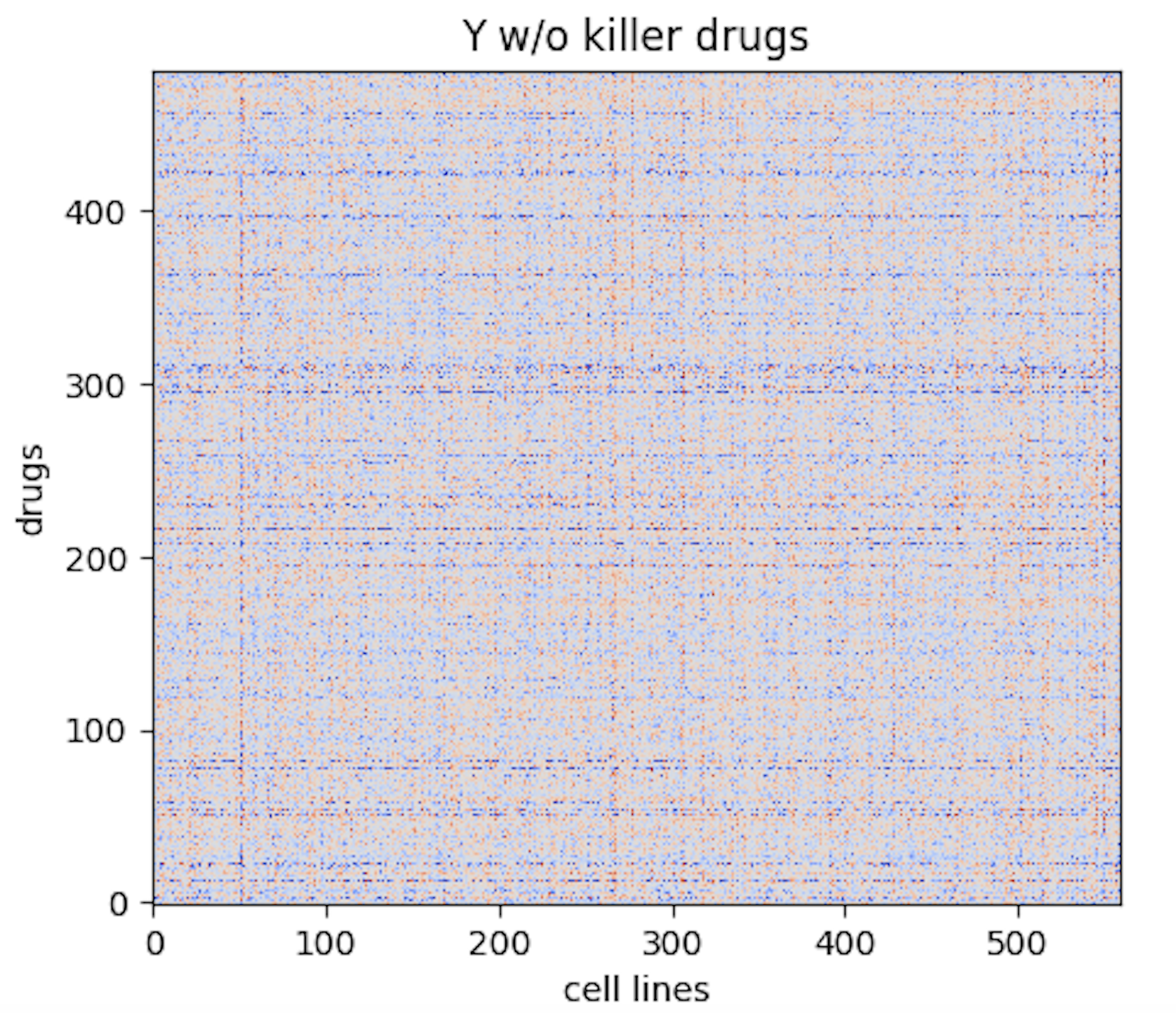}
    \caption{\textbf{Matrix obtained when we remove "killer drugs"}. Here, a "killer drug" is a drug that kills more than 80\% of cell lines.}
    \label{fig:killerY}
\end{figure}

\begin{figure}[ht]
    \centering
    \includegraphics[width=.4\linewidth]{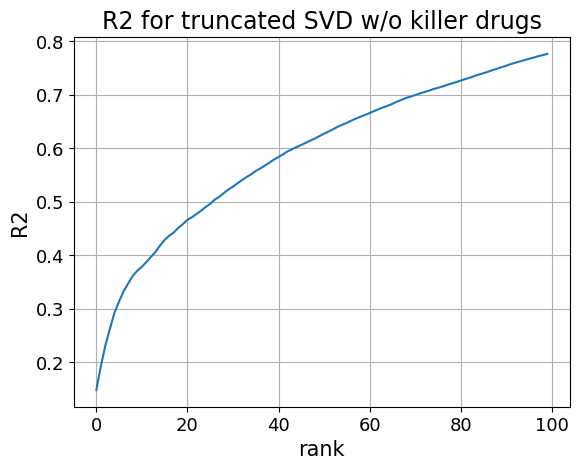}
    \caption{\textbf{Getting high R2 values is even more difficult than before} (compare it with \rref{fig:svalues}). After removing the "killer drugs", MSE(average) decreases from 0.87 to 0.28. This means that the completion task is easier for this baseline, so it is more difficult to obtain high $R^2$ values.}
    \label{fig:thresfoldr2killer}
\end{figure}

\begin{figure}[ht]
    \centering
    \includegraphics[width=\linewidth]{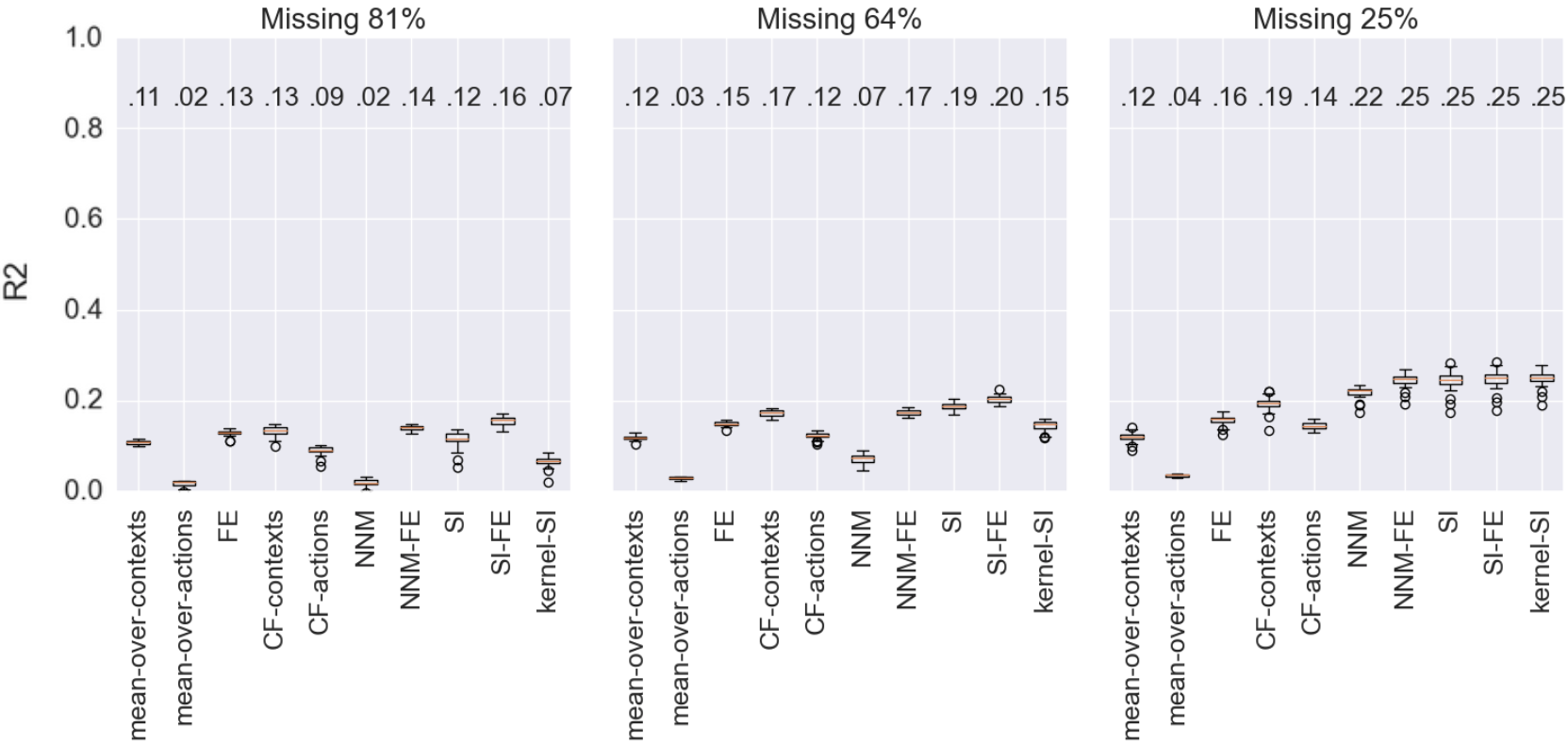}
    \caption{\textbf{Performance after removing "killer drugs".} The gap between mean-over-actions and SI is similar from the one in \rref{fig:boxplots}, despite having more room for improvement. }
    \label{fig:boxplotskiller}
\end{figure}

%\subsection{Definition of a "killer drug"}\label{killerdrugs}

\begin{figure}[ht]
    \centering
    \includegraphics[width=.5\linewidth]{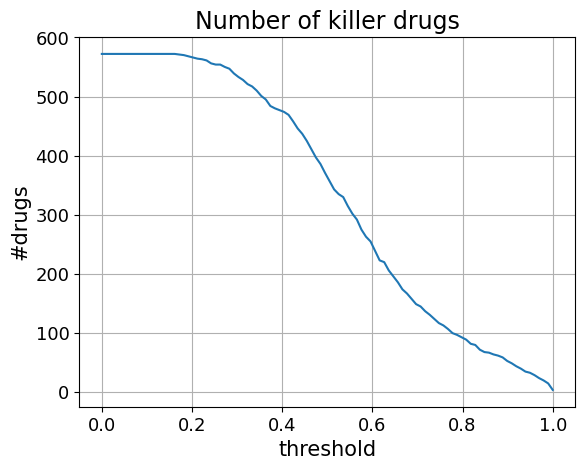}
    \caption{\textbf{Definition of a "killer drug"}. Here we show the number of drugs that kill more than \textit{threshold} $\%$ of cell lines. We consider that a cell line is dead if the viability score is negative. In our experiments, we used a threshold of $80\%$.}
    \label{fig:threshold}
\end{figure}

\end{document}